\documentclass{article} 
\usepackage{iclr2025_conference,times}


\usepackage{amsmath,amsfonts,bm}









\def\eqref#1{equation~\ref{#1}}









\def\1{\bm{1}}










\DeclareMathAlphabet{\mathsfit}{\encodingdefault}{\sfdefault}{m}{sl}
\SetMathAlphabet{\mathsfit}{bold}{\encodingdefault}{\sfdefault}{bx}{n}













\usepackage{hyperref}
\usepackage{url}
%
%
\newcommand{\boldface}[1]{\boldsymbol{#1}}  

\newcommand{\bff}{\boldface{f}}

\newcommand{\bfn}{\boldface{n}}

\newcommand{\bfp}{\boldface{p}}

\newcommand{\bfr}{\boldface{r}}
\newcommand{\bfs}{\boldface{s}}
\newcommand{\bft}{\boldface{t}}
\newcommand{\bfu}{\boldface{u}}

\newcommand{\bfw}{\boldface{w}}
\newcommand{\bfx}{\boldface{x}}

\newcommand{\bfA}{\boldface{A}}

\newcommand{\bfC}{\boldface{C}}

\newcommand{\bfF}{\boldface{F}}
\newcommand{\bfG}{\boldface{G}}

\newcommand{\bfI}{\boldface{I}}

\newcommand{\bfK}{\boldface{K}}

\newcommand{\bfU}{\boldface{U}}

%
%

%

%
%

\newcommand{\calN}{\mathcal{N}}

%
%





\newlength{\boxwidth}
\setlength{\boxwidth}{\textwidth}
\addtolength{\boxwidth}{-1cm}

\def\dd{\;\!\mathrm{d}}

\def\btheorem{\begin{theorem}}
\def\etheorem{\end{theorem}}
\def\blemma{\begin{lemma}}
\def\elemma{\end{lemma}}
\def\bproposition{\begin{proposition}}
\def\eproposition{\end{proposition}}
\def\bcorollary{\begin{corollary}}
\def\ecorollary{\end{corollary}}
\def\bdefinition{\begin{definition}}
\def\edefinition{\end{definition}}
\def\bexample{\begin{example}}
\def\eexample{\end{example}}
\def\bremark{\begin{remark}}
\def\eremark{\end{remark}}







\newcommand{\be}{\begin{equation}}
\newcommand{\ee}{\end{equation}}

\newcommand{\beq}{\begin{eqnarray*}}
\newcommand{\eeq}{\end{eqnarray*}}
\newcommand{\bem}{\begin{multline}}
\newcommand{\eem}{\end{multline}}
\newcommand{\ba}{\begin{align*}}
\newcommand{\ea}{\end{align*}}



\usepackage[utf8]{inputenc} 
\usepackage[T1]{fontenc}    
\usepackage{hyperref}       
\usepackage{url}            
\usepackage{booktabs}       
\usepackage{amsfonts}       
\usepackage{nicefrac}       
\usepackage{microtype}      
\usepackage{xcolor}         

\usepackage{graphicx}
\usepackage{import}
\usepackage{pgfplots}
\usepgfplotslibrary{groupplots, fillbetween}
\pgfplotsset{compat=1.18}
\usepackage{amssymb,amstext,amsmath,amsthm}
\usepackage{algorithm}
\usepackage{algpseudocode}
\usepackage{caption}
\usepackage{subcaption}
\usetikzlibrary{spy}
\usepackage{siunitx}
\sisetup{output-exponent-marker=\ensuremath{\mathrm{e}}}

\newcommand{\update}[1]{#1}

\theoremstyle{plain}
\newtheorem{theorem}{Theorem}
\newtheorem*{theorem*}{Theorem}
\newtheorem{proposition}{Proposition}
\newtheorem*{proposition*}{Proposition}
\newtheorem{lemma}[theorem]{Lemma}
\newtheorem{corollary}[theorem]{Corollary}
\theoremstyle{definition}
\newtheorem{definition}[theorem]{Definition}

\theoremstyle{remark}
\newtheorem*{remark}{Remark}

\newcommand\blfootnote[1]{%
  \begingroup
  \renewcommand\thefootnote{}\footnotetext[0]{#1}%
  \endgroup
}

\pgfplotsset{
legend image code/.code={
\draw[mark repeat=2,mark phase=2]
plot coordinates {
(0cm,0cm)
(0.1cm,0cm)        
(0.2cm,0cm)         
};%
}
}

\makeatletter
\makeatother

\title{Physics-Informed Diffusion Models}


\author{Jan-Hendrik Bastek \\
  Dept.\ of Mechanical and Process Eng. \\
  ETH Zurich\\
  Zurich, Switzerland \\
  \texttt{jbastek@ethz.ch} \\
  \And
  WaiChing Sun \\
  Dept.\ of Civil Eng.\ and Eng.\ Mechanics \\
  Columbia University \\
  New York, NY, USA \\
  \texttt{wsun@columbia.edu} \\
  \AND
  Dennis M.~Kochmann \\
  Dept.\ of Mechanical and Process Eng. \\
  ETH Zurich\\
  Zurich, Switzerland \\
  \texttt{dmk@ethz.ch} \\
}

%

\iclrfinalcopy 
\begin{document}

\maketitle

\begin{abstract}
Generative models such as denoising diffusion models are quickly advancing their ability to approximate highly complex data distributions. They are also increasingly leveraged in scientific machine learning, where samples from the implied data distribution are expected to adhere to specific governing equations. We present a framework that unifies generative modeling and \update{partial differential equation fulfillment by introducing a first-principle-based loss term that enforces generated samples to fulfill the underlying physical constraints. Our approach reduces the residual error by up to two orders of magnitude compared to previous work in a fluid flow case study and outperforms task-specific frameworks in relevant metrics for structural topology optimization. We also present numerical evidence that our extended training objective acts as a natural regularization mechanism against overfitting.} Our framework is simple to implement and versatile in its applicability for imposing equality and inequality constraints as well as auxiliary optimization objectives.\blfootnote{Code is available at \url{https://github.com/jhbastek/PhysicsInformedDiffusionModels}.}
\end{abstract}

\section{Introduction}

Denoising diffusion models \citep{Sohl-Dickstein2015} are generative models that have gained popularity due to their success in learning intricate data distributions across various modalities, including images \citep{Ho2020,Nichol2021,Rombach2021}, videos \citep{Ho2022,Ho2022a}, graphs \citep{Niu2020,Hoogeboom2022}, text \citep{Li2022,Wu2023}, and audio waveforms \citep{Kong2020}. Originally popularized within the image generation community, their outstanding representation abilities are also increasingly leveraged in the context of \textit{scientific machine learning}. Diffusion models have been used, among others, to upscale low-fidelity data to reduce computational costs \citep{Shu2023}, design new molecules \citep{Xu2022} or materials \citep{Xie2021,Dureth2023} with desired properties, or generate metamaterial unit cells tailored to a given stress-strain response in complex mechanical settings \citep{Bastek2023}. For such inverse problems, a key strength is their ability to efficiently capture the full distribution of feasible solutions and designs rather than focusing on a single deterministic outcome.

A common thread of all those applications is the explicit knowledge of the underlying governing equations, which such implied distribution must obey. Often, the training data does not stem from experimental observations but is rather generated by numerical simulations, which ensure that those equations are fulfilled. Nevertheless, diffusion models are traditionally trained on a purely \textit{data-driven} objective \citep{Xie2021,Buehler2022,Bastek2023,Vlassis2023,Sardar2023,Li2023,Lienen2023} and hence do not strictly enforce the intrinsic constraints during model training. Consequently, samples generated from such models may be statistically aligned with the training data but may not meet the required precision in scientific applications, where adherence to the underlying physics is crucial for deployment \citep{Jacobsen2023}.

More recently, efforts have been made to ensure that samples generated from the learned distribution conform to the known constraints. Examples include imitating human motion \citep{Yuan2022}, ensuring manufacturability of proposed designs \citep{Maze2023}, or, most importantly for physical systems, satisfying the underlying physical laws \citep{Shu2023,Jacobsen2023}, typically given by a set of partial differential equations (PDEs). Yet, a fundamental framework that rigorously addresses the incorporation of PDE constraints in such generative model settings, along with a robust mechanism to enforce these constraints akin to the well-established physics-informed neural networks (PINNs) \citep{Raissi2019} has remained elusive. We close this gap by proposing a new framework that unifies both settings and integrates PDE constraints meaningfully into the training process. Unlike previous approaches that primarily rely on some form of post-processing of generated samples and lack a derivation from first principles, we provide this theoretical foundation and directly embed constraints into the proven representation strength of diffusion models. This approach yields state-of-the-art results in terms of PDE fulfillment. By drawing synergies between the domains of PINNs \citep{Raissi2019} and generative modeling, we introduce \textit{physics-informed diffusion models} (PIDMs).

\paragraph{Contributions.} We make the following key contributions: \textbf{(i)} We present a novel and rigorous approach that unifies denoising diffusion models with PINNs and informs the model of PDE constraints during training, and we demonstrate via rigorous numerical experiments that our framework significantly reduces the PDE residual compared to state-of-the-art methods. \textbf{(ii)} We provide evidence that the additional training objective does \textit{not} necessarily compromise the data likelihood; instead, we observe that it acts as an effective regularization against overfitting. \textbf{(iii)} Our approach is simple to implement into the training protocol of existing diffusion model architectures, and inference is unaffected. \textbf{(iv)} While we here focus on PDEs as a sophisticated type of equality constraint, our framework is equally applicable to other equality and inequality constraints as well as auxiliary optimization objectives (potentially also provided via a differentiable surrogate model).

\section{Background}
\label{headings}

\subsection{Denoising diffusion models}
\label{sec:ddpm_background}

Denoising diffusion models are state-of-the-art generative models \citep{Ho2020, Dhariwal2021, Yang2024} that learn to gradually convert a sample of a simple prior, typically a unit Gaussian, to a sample from a generally unknown data distribution $q(\bfx_0)$. The idea is to introduce a fixed \textit{forward diffusion process} that incrementally adds Gaussian noise to a given data sample $\bfx_{0} \sim q(\bfx_0)$, following variance schedule $\left\{\beta_t \in(0,1)\right\}_{t=1}^T$ over $T$ steps, defined as
\be
q(\bfx_{1:T} \vert \bfx_{0})=\prod_{t=1}^{T} q(\bfx_{t} \vert \bfx_{t-1}), \quad q(\bfx_{t} \vert \bfx_{t-1})=\mathcal{N}(\bfx_{t} ; \sqrt{1-\beta_{t}} \bfx_{t-1}, \beta_{t} \bfI).
\ee
To generate new samples, we consider the \textit{reverse process},
\be
q(\bfx_{0: T})=p(\bfx_{T}) \prod_{t=1}^{T} q(\bfx_{t-1} \vert \bfx_{t}), \quad q(\bfx_{t-1} \vert \bfx_{t})=\mathcal{N}(\bfx_{t-1} ; \boldsymbol{\mu}(\bfx_{t}, t), \boldsymbol{\Sigma}(\bfx_{t},t)),
\ee
in which we approximate the unknown true inverse conditional distribution $q(\bfx_{t-1} \vert \bfx_{t})$ with a neural network $p_{\theta}(\bfx_{t-1} \vert \bfx_{t})$ parameterized by $\theta$. It aims to estimate the mean $\boldsymbol{\mu}_{\theta}$, while we fix the covariance to $\boldsymbol{\Sigma}\left(\bfx_t, t\right)=\frac{1-\bar{\alpha}_{t-1}}{1-\bar{\alpha}_t} \beta_t \boldsymbol{I}=\Sigma_t \bfI$ with $\bar{\alpha}_{t}=\prod_{i=1}^{t} \alpha_{i}, \, \alpha_{t}=1-\beta_{t}$. The network is trained by maximizing the variational lower bound of the log-likelihood, which can be simplified to several loss terms that mainly consist of KL-divergences between two Gaussians (and are hence computable in closed form) \citep{Sohl-Dickstein2015}. While the obvious choice is to estimate $\boldsymbol{\mu}_{\theta}$, alternative parameterizations are possible. We can obtain the mean $\boldsymbol{\mu}_{t}$ at timestep $t$ via a combination of the reparameterization trick and Bayes' theorem \citep{Ho2020,Kingma2013}:
\be
    \boldsymbol{\mu}_{t}(\bfx_t,\boldsymbol{\epsilon}_{t})=\frac{1}{\sqrt{\alpha_{t}}}\left(\bfx_{t}-\frac{\beta_{t}}{\sqrt{1-\bar{\alpha}_{t}}} \boldsymbol{\epsilon}_{t}\right)=\frac{\sqrt{\alpha_t}\left(1-\bar{\alpha}_{t-1}\right)}{1-\bar{\alpha}_t} \bfx_t+\frac{\sqrt{\bar{\alpha}_{t-1}} \beta_t}{1-\bar{\alpha}_t} \bfx_0,
    \label{eq:reparameterization_mu}
\ee
where $\boldsymbol{\epsilon}_t$ represents Gaussian noise to diffuse $\bfx_0$ to $\bfx_t$. Since $\bfx_t$ is known during training, predicting $\boldsymbol{\mu}_{t}$ fixes the Gaussian noise $\boldsymbol{\epsilon}_{t}$ or the clean signal $\bfx_0$ and vice versa, and we can equivalently train the model to predict these quantities. While \citet{Ho2020} simplified the training to minimize an unweighted noise mismatch, we here consider the loss
\be
L(\theta):=\mathbb{E}_{t, \bfx_{0},\boldsymbol{\epsilon}}\left[\lambda_t\left\|\bfx_0-\hat{\bfx}_0\left(\bfx_t(\bfx_{0},\boldsymbol{\epsilon}), t\right)\right\|^{2}\right],
\label{eq:original_loss}
\ee
where $\hat{\bfx}_0$ is the model \textit{estimate} of the clean signal (omitting the parametric dependence on $\theta$ for conciseness), and $\lambda_t$ is set to Min-SNR-5 weighting \citep{Hang2023}. Note that \eqref{eq:original_loss} is equivalent to the mean squared error of the $\boldsymbol{\epsilon}_t$ or $\boldsymbol{\mu}_t$ mismatch up to a time-dependent weighting factor \citep{Ho2020,Salimans2022}.

\subsection{Assembly of governing equations}

Physical laws are typically formulated as a set of PDEs over a domain $\Omega \subset \mathbb{R}^{d}$, expressed as
\be\label{eq:PDE_set}
    \boldsymbol{\mathcal{F}}[\bfu(\boldsymbol{\xi})]=\boldsymbol{0}, \quad \boldsymbol{\xi}=\left(\xi_{1}, \xi_{2}, \cdots, \xi_{d}\right) ^{\intercal}\in \Omega, \quad \bfu=\left(u_{1}(\boldsymbol{\xi}), u_{2}(\boldsymbol{\xi}), \cdots, u_{c}(\boldsymbol{\xi})\right)^{\intercal} \in \mathbb{R}^c,
\ee
with boundary conditions
\be\label{eq:boundary_set}
    \boldsymbol{\mathcal{B}}[\bfu(\boldsymbol{\xi})]= \boldsymbol{0}, \quad \boldsymbol{\xi} \in \partial \Omega,
\ee
where $\boldsymbol{\mathcal{F}}$ is a differential operator, $\boldsymbol{\mathcal{B}}$ is a boundary condition operator, $\partial\Omega$ is the boundary of domain $\Omega$, and $\bfu$ is the sought solution field that satisfies the set of PDEs for all $\boldsymbol{\xi}\in\Omega$ and boundary conditions on $\partial\Omega$. \update{We emphasize that we reserve $\boldsymbol{\xi}$ for \textit{spatial} coordinates defined on a physical domain, while $\bfx$ refers to data, which here typically contains the solution field on a discretized set of spatial coordinates, as elaborated below.}

We assume that samples generated from the model $\bfx_{0}\sim p_{\theta}(\bfx_{0})$ must satisfy \eqref{eq:PDE_set} and \eqref{eq:boundary_set}. In this study, we mainly consider image-based architectures common in diffusion models. More formally, we consider a discretized pixel grid $\Omega^h\subset\mathbb{Z}\times\mathbb{Z}$ (which may serve as an approximation of the continuous domain $\mathbb{R}^2$), where $\partial\Omega^h$ consists of the boundary pixels. For $n\times n$ pixels, the model's output is thus defined over $\bfx_0\in\mathbb{R}^{c\times n\times n}$. For instance, in the context of Darcy flow problems in porous media \citep{Jacobsen2023} $\bfx_0$ describes the pressure field, or in the context of solid mechanics the displacement field \citep{Gao2022}. While PINNs \citep{Raissi2019} establish an explicit map $\mathcal{N}:\boldsymbol{\xi}\mapsto\bfu$ between the input and solution field, which enables the computation of required derivatives for $\boldsymbol{\mathcal{F}}$ (and potentially $\boldsymbol{\mathcal{B}}$) via automatic differentiation, we here approximate those derivatives via finite differences or comparable methods that directly operate on $\bfx_0$. The residual $\boldsymbol{\mathcal{R}}(\bfx_0)$ of the (discretized) solution field $\bfx_0$ is then established as a measure of the discrepancy between the generated sample $\bfx_0$ and the governing equations it is expected to satisfy. It is defined by the corresponding PDEs and the respective boundary conditions. More precisely, we stack both contributions into a vector as
\be
    \boldsymbol{\mathcal{R}}(\bfx_0) := \begin{bmatrix} 
    \boldsymbol{\mathcal{F}}[\bfx_0] \\
    \boldsymbol{\mathcal{B}}[\bfx_0]
\end{bmatrix}.
\ee
For evaluation, we introduce the mean absolute error $\mathcal{R}_{\text{MAE}}(\bfx_0)$ as defined in \eqref{eq:residual_mae} in Appendix~\ref{app:numerical_studies}.

\section{Physics-informed diffusion models}
\label{sec:pidm_background}

We explore a scenario in which our generative diffusion model must learn a distribution whose samples are to comply with a set of governing equations, i.e., $\boldsymbol{\mathcal{R}}(\bfx_0)=\boldsymbol{0}$. In the usual data-driven setting, this is only indirectly ensured through the training data, typically collected from a forward simulator that produces data points which inherently follow the governing equations. These fully describe the physical system and provide us with, in principle, an \textit{infinite} source of information \citep{Rixner2021}, whereas solved instances $\{\bfx^1_0,\cdots,\bfx^n_0\}$, posing as training data, represent only a finite set of evaluations in terms of the sought solution fields.\footnote{Even if we assume access to infinite training samples, the residual information may still be beneficial, since it typically imposes constraints on higher-order derivatives (comparable to Sobolev training \citep{Czarnecki2017}).} Thus, our strategy is to first state our optimization objective based on the more fundamental governing equations and only subsequently incorporate training data.

\subsection{Consideration of PDE constraints}

We maintain the probabilistic perspective of generative models and introduce the residuals as \textit{virtual observables} \citep{Rixner2021} $\hat{\bfr}=\boldsymbol{0}$, which we consider to be sampled from the distribution
\be
    q_{\mathcal{R}}(\hat{\bfr}\vert\bfx_0) =\calN(\hat{\bfr};\boldsymbol{\mathcal{R}}(\bfx_0),\sigma^2\bfI).
    \label{eq:virt_likelihood}
\ee
In the limit $\sigma^2\to0$, this recovers the deterministic setting of strictly enforcing the residual equations, as all probability mass is concentrated in $\boldsymbol{\mathcal{R}}(\bfx_0)$. This can be used to compute the virtual likelihood $p_{\theta}(\hat{\bfr})$, which we expand in terms of drawn samples $\bfx_0$ as
\be
    p_{\theta}(\hat{\bfr}) = \int p_{\theta}(\hat{\bfr},\bfx_0) \dd \bfx_0 = \int q_{\mathcal{R}}(\hat{\bfr}\vert\bfx_0) p_{\theta}(\bfx_0) \dd \bfx_0 = \mathbb{E}_{\bfx_0\sim p_{\theta}(\bfx_0)}q_{\mathcal{R}}(\hat{\bfr}\vert\bfx_0).
\ee
This factorization of $p_{\theta}(\hat{\bfr}, \bfx_0)$ is reasonable as the residual follows $\bfx_0$ via \eqref{eq:virt_likelihood}. The goal is to maximize the usual log-likelihood over the virtual observable $\hat{\bfr}$, i.e.,
\be
    \arg \max_{\theta}\mathbb{E}_{\hat{\bfr}}\left[\log p_{\theta}(\hat{\bfr})\right] 
    = \arg \max_{\theta}\mathbb{E}_{\bfx_0\sim p_{\theta}(\bfx_0)}\left[\log q_{\mathcal{R}}(\hat{\bfr}=\boldsymbol{0}\vert\bfx_0)\right].
    \label{eq:resid_maximization}
\ee
Thus, any samples from $p_{\theta}(\bfx_0)$ are evaluated on their log-likelihood via \eqref{eq:virt_likelihood}, which can also be understood as a probabilistic reinterpretation of the loss used to train PINNs \citep{Raissi2019}. \update{Alternatively, \eqref{eq:resid_maximization} can also be understood as sampling from an unnormalized target distribution assembled by evaluating $\log q_{\mathcal{R}}$ over $\boldsymbol{x}_0$, as is often considered in data-free settings \citep{Zhang2022a,Vargas2023,Berner2024}. Our setting, however, differs as we consider the more common scenario of applying diffusion models to learn a data distribution, as elaborated as follows.}

\subsection{Consideration of observed data}

We emphasize our focus on a \textit{generative} model class in which training data is typically available. This is crucial for two main reasons: first, identifying solutions $\bfx_0$ that satisfy the governing equations is non-trivial, and we may understand the obtained training data as guidance for the model towards feasible solutions, which may accelerate the training. Alternatively, the constraints may be far from a ``well-posed'' problem and admit a large set of solutions, while we are only interested in a small subset of those reflected by the data. Second, \eqref{eq:resid_maximization} contains any distribution that produces samples with zero residuals, and the model may simply collapse to a single solution instance (such as in the classical PINN setup), which is not the use case for a generative scenario. Rather, we aim to leverage the proven capabilities of generative models to learn complex distributions, such as optimal mechanical designs or fluid flows based on some conditioning, while ensuring adherence to the physical laws. Even if no data is available, we may introduce an uninformative prior such as a unit Gaussian to promote the exploration of different solutions, which we briefly investigate in Appendix~\ref{app:toy_problem}.

We thus extend the objective \eqref{eq:resid_maximization} by including the usual data likelihood term, which is taken as the standard optimization objective in data-driven diffusion models as
\be
    \arg \max_{\theta}\mathbb{E}_{\bfx_0\sim q(\bfx_0)}\left[\log p_{\theta}(\bfx_0)\right]+\mathbb{E}_{\bfx_0\sim p_{\theta}(\bfx_0)}\left[\log q_{\mathcal{R}}(\hat{\bfr}=\boldsymbol{0}\vert\bfx_0)\right].
    \label{eq:resid_data_maximization}
\ee
We show in Appendix~\ref{app:consistency} that, if diffusion models are interpreted as score-based models, a straightforward inclusion of the virtual (residual) likelihood in the loss function recovers a consistent training objective in the sense that the optimal score model also maximizes the introduced virtual (residual) likelihood and recovers samples from $q(\bfx_0)$.

\subsection{Simplification of the training objective}
\label{sec:training_simplification}

The joint loss \eqref{eq:resid_data_maximization} requires sampling not just from $q(\bfx_0)$ but also $p_{\theta}(\bfx_0)$, which is costly for diffusion models due to their iterative nature. We explore two ideas to mitigate this increased complexity by considering (i) a straightforward evaluation of the residual based on the readily available $\hat{\bfx}_0$ and (ii)~an accelerated sampling strategy via denoising diffusion implicit models (DDIMs) \citep{Song2020}.
\paragraph{Mean estimation.} As seen in \eqref{eq:original_loss}, the diffusion model objective can be understood as minimizing a (time-weighted) mean-squared distance between the predicted and true sample, given a noisy input. Hence, $\hat{\bfx}_0$ is an estimate of $\mathbb{E}[\bfx_0\vert\bfx_t]$ \citep{Song2021}. Evaluating the residual on this estimate is therefore in general not fully consistent, as $\boldsymbol{\mathcal{R}}(\mathbb{E}[\bfx_0\vert\bfx_t]) \neq \mathbb{E}[\boldsymbol{\mathcal{R}}(\bfx_0\vert\bfx_t)]$ (also referred to as the \textit{Jensen gap} \citep{Gao2017}). Only at the last sampling step will this estimate align with a generated sample $\bfx_0$ (i.e., at $t=1$, assuming $\beta_1=0$) and otherwise introduce a conflicting objective between the data and residual loss with increasing $t$. To mitigate this, we propose to increase the variance of the introduced residual likelihood with increasing $t$, while enforcing tighter adherence to the constraint as $t\to0$.
\paragraph{Sample estimation.} Alternatively, we may aim to evaluate the residual of an actual sample $\bfx_0\sim p_{\theta}(\bfx_0)$, motivated by the implied consistency shown in Appendix~\ref{app:consistency} (though in idealized settings). This comes at the expense of increased computational complexity due to the many forward passes required in the denoising process. We hence consider (deterministic) DDIM \citep{Song2020} to accelerate sampling while maintaining high sample quality, with an inherent trade-off between sample quality and the number of DDIM timesteps. We consider a simple two-step sampling of $\bfx_0$ from any $t$ and again increase the variance as $t\to T$, where timesteps are coarser, which leads to reduced sample quality (see Appendix~\ref{app:ddim_sampling} for further details). While we focus on diffusion models, we note that single-step generation models such as consistency models \citep{Song2023} are a promising alternative, offering direct access to a sample.

Since the remaining derivations hold true for both \update{the mean and sample estimation, we no longer distinguish between these estimates and denote both of them with $\bfx_0^*$ (i.e., $\bfx_0^*:=\hat{\bfx}_0=\mathbb{E}[\bfx_0\vert\bfx_t]$ or $\bfx_0^*:=\text{DDIM}[\bfx_t]$), based on which the residual can efficiently be evaluated.} As $\bfx_0^*$ is obtainable at any timestep $t$, we adapt \eqref{eq:resid_data_maximization} to
\be
    \arg \max_{\theta} \mathbb{E}_{\bfx_0\sim q(\bfx_0)} \left[\log p_\theta(\bfx_0)\right] + \mathbb{E}_{\bfx_{1:T}\sim p_{\theta}(\bfx_{1:T})}\left[ \log q_{\mathcal{R}}(\hat{\bfr}=\boldsymbol{0}\vert\bfx_0^*(\bfx_{1:T}))\right].
    \label{eq:data_physics_likelihood_obj}
\ee
Estimates at noisier inputs may be less accurate for aforementioned reasons, thus we penalize constraint violation less as $t\to T$ by modeling $q_{\mathcal{R}}(\hat{\bfr}\vert\bfx_0^*(\bfx_{1:T}))$ for simplicity based on a scaled version of the variance scheduler of the reverse process as
\be
    q_{\mathcal{R}}(\hat{\bfr}\vert\bfx_0^*(\bfx_{1:T})) = \prod_{t=1}^{T} q_{\mathcal{R}}(\hat{\bfr}\vert\bfx_0^*(\bfx_{t},t)), \quad \text{where } \ q_{\mathcal{R}}(\hat{\bfr}\vert\bfx_0^*(\bfx_{t},t)) = \calN(\hat{\bfr};\boldsymbol{\mathcal{R}}(\bfx_0^*), \Sigma_t/c\,\bfI),
    \label{eq:residual_dist}
\ee
where $\Sigma_t$ is, as introduced before, the fixed variance of the denoising process. This idea of adjusting the temperature of the target distribution shares some similarity with the \textit{annealed noise distribution} introduced in a concurrent work by \citet{Sanokowski2024} in the context of combinatorial optimization. The scale factor $c>0$ introduced in \eqref{eq:residual_dist}, which is the only hyperparameter in our setup, effectively dictates the penalty for deviating from $\boldsymbol{\mathcal{R}}(\bfx_0^*)=\boldsymbol{0}$. Figure~\ref{fig:PIDM} shows a graphical illustration of this process.

\begin{figure}[ht]
  \centering
   \includegraphics[width=1.0\linewidth]{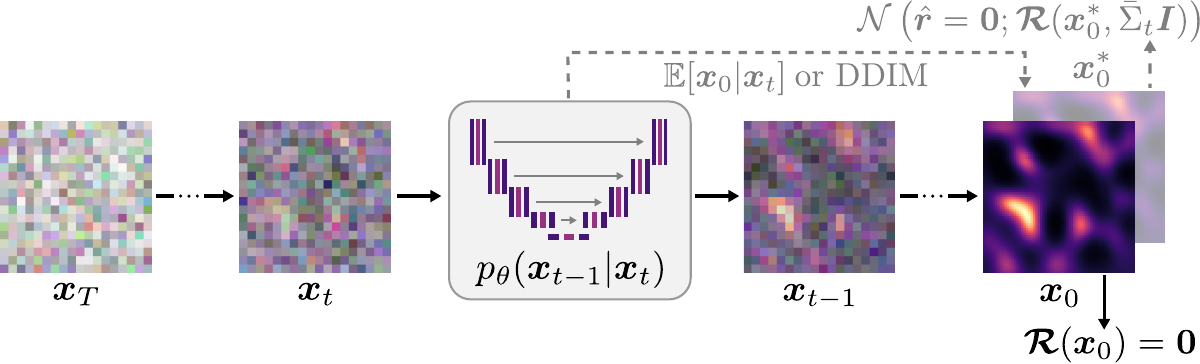}
  \caption{An approximation $\bfx_0^*$ of the clean signal for residual evaluation can be obtained at any denoising timestep $t$ by directly considering the estimated expectation $\mathbb{E}[\bfx_0\vert\bfx_t]$ or by actual (accelerated DDIM \citep{Song2020}) sampling. We tighten the variance of the virtual likelihood as $t\to0$.}
  \label{fig:PIDM}
\end{figure}

As \eqref{eq:data_physics_likelihood_obj} still requires sampling over $\bfx_{1:T}\sim p_{\theta}(\bfx_{1:T})$, we simplify this by instead sampling from the available $q(\bfx_{1:T})$, effectively ignoring the likelihood ratio (see Appendix~\ref{app:simplified_training} for details). As the original \citep{Sohl-Dickstein2015} and physics-driven objectives are then under the same expectation, the classical data loss \eqref{eq:original_loss} can be straightforwardly extended to include a physics-informed loss, resulting in the loss of our proposed PIDM:
\be
    L_{\text{PIDM}}(\theta)=\mathbb{E}_{t \sim[1, T], \bfx_{0:T}\sim q(\bfx_{0:T})}\left[\lambda_t\|\bfx_0-\hat{\bfx}_0(\bfx_t, t)\|^2 + \frac{1}{2\bar{\Sigma}_t} \|\boldsymbol{\mathcal{R}}(\bfx_0^*(\bfx_t, t))\|^2\right],
    \label{eq:data_residual_loss}
\ee
where $\bar{\Sigma}_t=\Sigma_t/c$ is the rescaled variance. Since $\Sigma_0=0$ for a deterministic last step, we set $\Sigma_0\leftarrow\Sigma_1$. Algorithm~\ref{pi_training} displays the updated training objective, requiring only minor modifications to the standard training setup \citep{Ho2020}.

\begin{algorithm}
\caption{Physics-informed diffusion model training}
\label{pi_training}
\begin{algorithmic}[1]
\State Set $\bar{\Sigma}_t = \Sigma_t/c$
\Repeat
    \State $\bfx_0 \sim q(\bfx_0)$
    \State $t \sim \text{Uniform}\{1, \ldots, T\}$
    \State $\boldsymbol{\epsilon} \sim \mathcal{N}(\boldsymbol{0}, \bfI)$
    \State $\bfx_t=\sqrt{\bar{\alpha}_t} \bfx_0+\sqrt{1-\bar{\alpha}_t} \boldsymbol{\epsilon}$
    \State Estimate $\bfx_0^*$ via $\hat{\bfx}_0=\mathbb{E}[\bfx_0\vert\bfx_t]$ or DDIM sampling \citep{Song2020}
    \State Take gradient descent step on $\nabla_{\theta} \left[ \lambda_t\|\bfx_0-\hat{\bfx}_0(\bfx_t, t)\|^2 + \frac{1}{2\bar{\Sigma}_t} \|\boldsymbol{\mathcal{R}}(\bfx_0^*(\bfx_t, t))\|^2 \right]$
\Until{converged}
\end{algorithmic}
\end{algorithm}

We point out that the model can also be trained to match the mean or noise as in \citet{Ho2020}, from which we can equally assemble $\mathbb{E}[\bfx_0\vert\bfx_t]$ if considering mean estimation via \eqref{eq:reparameterization_mu} (see Appendix~\ref{app:score_model_mean} for estimation with score models) but we empirically found that this leads to larger residual errors. Also, we emphasize that PDEs can be understood as a specific instance of equality constraints and we could generally consider any differentiable forward surrogate model $\mathcal{R}_{\theta}(\bfx_0)$ that estimates some property or classification of the predicted samples to be matched, i.e., $\mathcal{R}_{\theta}(\bfx_0)=\hat{r}_{\text{target}}$. For a discussion on how to incorporate inequality constraints and auxiliary optimization objectives see Appendix~\ref{app:other_constraints}. Lastly, we note that the optimal scaling $c$ generally depends on the considered scenario, and it is here estimated by a simple parameter sweep. As usual in multi-objective optimization, trade-offs between the different loss contributions are expected, and $c$ must be selected so that the model is meaningfully informed by the residual loss but does not ignore the data likelihood.

\section{Experiments}
\label{sec:experiments}

We here present two benchmarks for distributions from which samples are implied to adhere to specific governing PDEs and constraints, as recently studied in contemporary work \citep{Jacobsen2023, Giannone2023}. To obtain intuition for the effects of the proposed loss, see the toy problem presented in Appendix~\ref{app:toy_problem} as an instructive example.

\subsection{Darcy flow}
\label{sec:darcy_flow}

\paragraph{Setup.} We first study the 2D Darcy flow equations, which describe the steady-state solution for fluid flow through a porous medium, here on a square domain $\Omega=[0,1]^2$. Generally, we follow the setup of previous studies \citep{Jacobsen2023, Zhu2018} by sampling permeability fields $K(\boldsymbol{\xi})$ from a Gaussian random field, which are solved for their (unique) pressure distribution $p(\boldsymbol{\xi})$. Similarly, we create a training and a validation dataset of 10{,}000 and 1{,}000 datapoints, respectively, by solving the governing equations (see \eqref{eq:darcy_pdes}) for a sampled permeability field on a $64\times 64$ grid. Second-order central finite differences are used to assemble and solve a linear system (see \citet{Jacobsen2023}), giving pairs $(\bfK,\bfp)$ with $\bfK,\bfp \in \mathbb{R}^{n\times n}$. We consider a U-Net \citep{Ronneberger2015} architecture with $64\times64$ pixels as in- and output dimensions that align with the considered grid and allow for the same residual evaluation via finite differences also used to create the dataset. We increase the variance of the virtual residual likelihood by setting $c=10^{-3}$ for the mean estimation and $c=10^{-5}$ for the sample estimation. Throughout the experiments, we observed that the mean estimation performs best with a variance of the residual likelihood around two orders of magnitude smaller than the best performance for the sample estimation. Further details of the considered setup and implementation are given in Appendices~\ref{app:darcy_flow_details}~and~\ref{app:hyperparams_pdes}.

\begin{figure}
\centering
\begin{tikzpicture}
\begin{groupplot}[
    group style={
        group size=2 by 1,
        xlabels at=edge bottom,
        ylabels at=edge left,
        horizontal sep=2cm,
    },
    xlabel={Training iteration},
    width=0.475\textwidth,
    grid=major,
    minor tick num=5,
    ymode=log,
    xmin=0,
    xmax=30e4,
    legend columns=2,
    legend pos=north east,
    legend style={cells={anchor=west}, nodes={scale=0.7, transform shape}},
]
\nextgroupplot[
    title={(a) Residual error},
    ylabel={$\mathcal{R}_{\text{MAE}}(\bfx_0), \bfx_0\sim p_{\theta}(\bfx_0)$},
    ymax=5e2,
    ymin=5e-3,
]

\addplot [
    mark = none,
    color=magenta,
] table [x=Step, 
    y=darcy_unet_plain-residual_abs_mean, 
    col sep=comma] {figures/darcy/data/samples.csv};
\addlegendentry{Diffusion}

\newcommand{\noofsamples}{15}
\pgfplotsforeachungrouped \i in {0,...,\noofsamples}{
    \addplot [only marks, forget plot,
        mark options={scale=0.2, opacity=0.3},
        color=magenta,
    ] table [x=Step, 
        y=darcy_unet_plain-residual_abs_sample_\i, 
        col sep=comma] {figures/darcy/data/samples.csv};
}
    
\addplot [ 
    mark = none,
    color=green,
] table [x=Step, 
    y=darcy_unet_plain_residual_guidance-residual_abs_mean, 
    col sep=comma] {figures/darcy/data/samples.csv};
\addlegendentry{PG-Diffusion}

\pgfplotsforeachungrouped \i in {1,...,\noofsamples}{
    \addplot [only marks, forget plot,
        mark options={scale=0.2, opacity=0.3},
        color=green,
    ] table [x=Step, 
        y=darcy_unet_plain_residual_guidance-residual_abs_sample_\i, 
        col sep=comma] {figures/darcy/data/samples.csv};
}

\addplot [ 
    mark = none,
    color=gray,
] table [x=Step, 
    y=darcy_unet_plain_cocogen_1e-6_corrected-residual_abs_mean, 
    col sep=comma] {figures/darcy/data/samples.csv};
\addlegendentry{CoCoGen}

\pgfplotsforeachungrouped \i in {1,...,\noofsamples}{
    \addplot [only marks, forget plot,
        mark options={scale=0.2, opacity=0.3},
        color=gray,
    ] table [x=Step, 
        y=darcy_unet_plain_cocogen_1e-6_corrected-residual_abs_sample_\i, 
        col sep=comma] {figures/darcy/data/samples.csv};
}

\addplot [ 
    mark = none,
    color=blue,
] table [x=Step, 
    y=laced-sweep-1-residual_mean_abs_mean, 
    col sep=comma] {figures/darcy/data/samples.csv};
\addlegendentry{PIDM-ME (ours)}

\pgfplotsforeachungrouped \i in {1,...,\noofsamples}{
    \addplot [only marks, forget plot,
        mark options={scale=0.2, opacity=0.3},
        color=blue,
    ] table [x=Step, 
        y=laced-sweep-1-residual_mean_abs_sample_\i, 
        col sep=comma] {figures/darcy/data/samples.csv};
}

\addplot [ 
    mark = none,
    color= orange,
] table [x=Step, 
    y=flowing-sweep-2-residual_mean_abs_mean, 
    col sep=comma] {figures/darcy/data/samples.csv};
\addlegendentry{PIDM-SE (ours)}

\pgfplotsforeachungrouped \i in {1,...,\noofsamples}{
    \addplot [only marks, forget plot,
        mark options={scale=0.2, opacity=0.3},
        color=orange,
    ] table [x=Step, 
        y=flowing-sweep-2-residual_mean_abs_sample_\i, 
        col sep=comma] {figures/darcy/data/samples.csv};
}

\nextgroupplot[
    title={(b) Test data loss},
    ylabel={$\mathbb{E}_{t, \bfx_{0}}\left[\lambda_t\left\|\bfx_0-\hat{\bfx}_0\right\|^{2}\right]$},
    ymax=1e1,
    ymin=1e-3,
]
\addplot [magenta, opacity=0.5, forget plot] table [x=Step, y=darcy_unet_plain-loss_data_test, col sep=comma] {figures/darcy/data/test_data.csv};
\addplot [magenta] table [x=Step, y=darcy_unet_plain-loss_data_test_moving_avg, col sep=comma] {figures/darcy/data/test_data.csv};
\addlegendentry{Diffusion}
\addplot [green, opacity=0.2, forget plot] table [x=Step, y=darcy_unet_plain_residual_guidance-loss_data_test, col sep=comma] {figures/darcy/data/test_data.csv};
\addplot [green] table [x=Step, y=darcy_unet_plain_residual_guidance-loss_data_test_moving_avg, col sep=comma] {figures/darcy/data/test_data.csv};
\addlegendentry{PG-Diffusion}
\addplot [blue, opacity=0.2, forget plot] table [x=Step, y=laced-sweep-1-loss_data_test, col sep=comma] {figures/darcy/data/test_data.csv};
\addplot [blue] table [x=Step, y=laced-sweep-1-loss_data_test_moving_avg, col sep=comma] {figures/darcy/data/test_data.csv};
\addlegendentry{PIDM-ME (ours)}
\addplot [orange, opacity=0.2, forget plot] table [x=Step, y=flowing-sweep-2-loss_data_test, col sep=comma] {figures/darcy/data/test_data.csv};
\addplot [orange] table [x=Step, y=flowing-sweep-2-loss_data_test_moving_avg, col sep=comma] {figures/darcy/data/test_data.csv};
\addlegendentry{PIDM-SE (ours)}
\end{groupplot}
\end{tikzpicture}
\caption{Evaluation of the residual error (a) and test data loss (b) during training. In (a), we generate 16 samples every 10k training iterations and plot the average (solid lines) and individual (dots) residual errors for the standard diffusion model (`Diffusion'), the physics-guided model (`PG-Diffusion') \citep{Shu2023}, CoCoGen \citep{Jacobsen2023}, and the proposed PIDM using either mean or sample estimation (`PIDM-ME' and `PIDM-SE', respectively). Note that for CoCoGen not all samples converged, so that we excluded the non-converged data from the indicated average. In (b), we plot the data loss evaluated on a test set for the proposed PIDM variants and those frameworks that differ from ours during training.}
\label{fig:darcy_residuals_dataloss}
\end{figure}
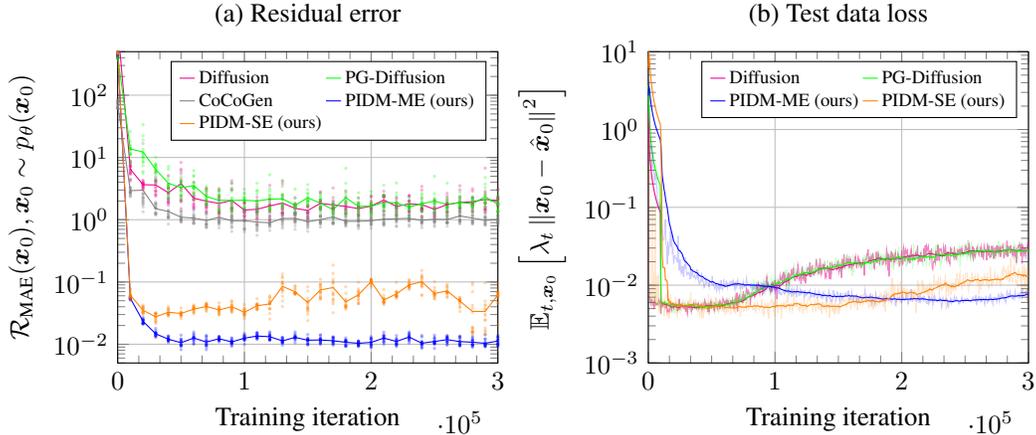

\begin{figure}[ht]
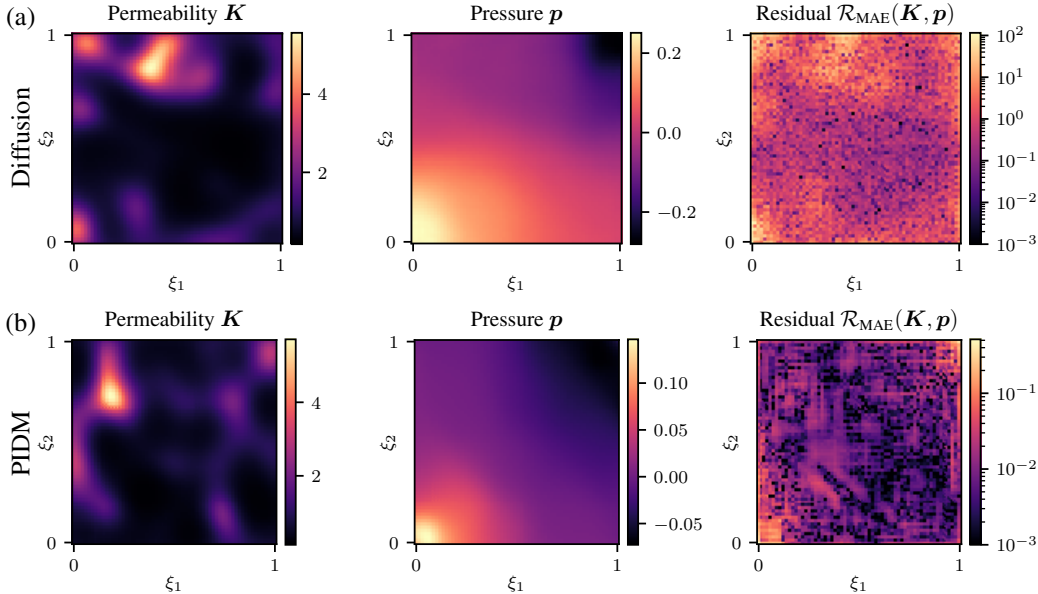

    \begin{center}
        \begin{tikzpicture}
            \node[anchor=south west,inner sep=0] (image) at (0,0){\scalebox{0.85}{\import{figures/darcy/plain_sample}{fig.pgf}}};
            \node[anchor=south west,inner sep=0] at (0,-4cm){\scalebox{0.85}{\import{figures/darcy/PIDM_ME_sample_0}{fig.pgf}}};
            \begin{scope}[x={(image.south east)},y={(image.north west)}]
                \node at (0.0,0.9){(a)};
                \node at (0.0,-0.05){(b)};
                \node[rotate=90] at (0.0,0.53) {Diffusion};
                \node[rotate=90] at (0.0,-0.42) {PIDM};
            \end{scope}
        \end{tikzpicture}
    \end{center}
    \caption{Generated permeability and pressure fields as well as the corresponding residual error from diffusion models trained on the Darcy flow dataset, where (a) is sampled from a standard diffusion model and (b) from our proposed PIDM with mean estimation. Additional samples are shown in Appendix~\ref{app:add_darcy_samples}.}
    \label{fig:darcy_samples}
\end{figure}

\paragraph{Results.} To evaluate the performance of our proposed PIDM, we benchmark it against three relevant setups: (i) a model trained on the standard (purely data-driven) objective \eqref{eq:original_loss}, (ii) a ``physics-guided'' model trained on the standard objective but using residual information as guidance, as proposed by \citet{Shu2023}, and (iii) a model similar to (i) but with first-order residual corrections during inference, as described in CoCoGen \citep{Jacobsen2023} (see Appendix~\ref{app:darcy_flow_details} for further details). We examine the performance of the different diffusion model variants by tracking the evolution of the residual error of generated samples from the learned distributions $\bfx_0\sim p_{\theta}(\bfx_0)$ alongside the test data loss throughout the training process in Figure~\ref{fig:darcy_residuals_dataloss}. The PIDM showcases a remarkable improvement, reducing the residual error by around two orders of magnitude in comparison to the standard diffusion model. This holds for both the mean and sample (DDIM) estimation, where the former outperforms the latter variant in terms of the residual error. The mean estimation takes around 23\% longer to train compared to the standard diffusion model due to the residual evaluation, whereas the sample estimation took around 69\% longer due to the additional forward pass (see Appendix~\ref{app:ddim_sampling}). We could not observe a significant reduction of the residual error for the physics-guided model \citep{Shu2023}. We hypothesize that the mere inclusion of gradient information into the model seems insufficient because it does not truly enforce residual minimization. CoCoGen \citep{Jacobsen2023} shows a moderate improvement but does not bring the residual close to values where the samples could be considered physically consistent. We display generated samples in Figure~\ref{fig:darcy_samples} and Appendix~\ref{app:add_darcy_samples} for both the standard diffusion model and PIDM, confirming a drastically reduced residual error over the whole domain. Figures~\ref{fig:darcy_residuals_dataloss}b, \ref{fig:darcy_samples}b~and~\ref{fig:add_darcy_samples} confirm that the PIDMs maintain the generative diversity with both estimation techniques and do not collapse to a single solution instance. It successfully generates permeability (and pressure) fields that adhere to the data distribution (besides respecting the PDE). 

For any multi-objective loss as given in \eqref{eq:data_residual_loss}, a trade-off between the data and residual minimization is generally expected, depending on their relative weighting dictated by $c$ \citep{Wang2021}. Interestingly, Figure~\ref{fig:darcy_residuals_dataloss}b shows that the PIDM with mean estimation eventually recovers a similar test data loss as the vanilla diffusion model and is significantly less prone to overfitting. Even more notable is that the PIDM with sample estimation initially has a similar test data loss trajectory as the vanilla model---despite the additional loss term---and also remains more robust to overfitting. This is strong evidence of the consistent data and physics loss (see Appendix~\ref{app:consistency}). Thus, sample estimation drastically improves the alignment of the two losses with only a single additional forward pass. The increased robustness to overfitting in both mean and sample estimation is a clear indicator that the model learns a more robust internal representation of the data distribution, as it is forced to generate samples that adhere to the true distribution or, equivalently, the underlying data generation mechanism---viz.\ the known constraints that govern the system. Thus, adding the physical (residual) loss may benefit the generative performance by enhancing its generalization capability.

\subsection{Topology optimization}
\label{sec:topopt}

\paragraph{Setup.} We consider 2D structural topology optimization as a second example. In this setting, the goal is to find the optimal material distribution $\rho_{\text{opt}}(\boldsymbol{\xi})$ that maximizes the mechanical stiffness, or equivalently, minimizes the compliance of a structure under a set of constraints that typically consist of mechanical equilibrium, boundary conditions, and a volume constraint. This is classically solved via the SIMP method based on a finite element (FE) discretization \citep{Bendsoe2004}. We again consider a square domain $\Omega=[0,1]^2$ and benchmark our proposed PIDM to state-of-the-art frameworks \citep{Maze2023, Giannone2023} that also provide a dataset consisting of $30{,}000$ optimized structures with various boundary conditions and volume constraints and two proposed test scenarios with in- and out-of-distribution boundary conditions. We train a U-Net architecture similar to the one in Section~\ref{sec:darcy_flow} but with a larger latent dimension and additional in- and output channels on this dataset. To ensure consistent evaluation of the residuals, we do \textit{not} use finite differences but interpret the pixels as nodes of the underlying FE mesh to assemble a consistent stiffness matrix equivalent to the one used to generate the data. We scale the variance of the residual likelihood with $c=0.01$ and introduce the volume constraint as an additional equality constraint with $c=0.1$ (as the optimal topology will contain the maximum allowed material). We also introduce a slight bias to minimize compliance (setting $\lambda=10^{-6}$ for the optimization objective, see Appendix~\ref{app:opt_objectives}). Further details of the setup, model architecture, residual evaluation, and implementation are presented in Appendices~\ref{app:top_opt_details}~and~\ref{app:hyperparams_pdes}.

\paragraph{Results.}

We evaluate the performance of our proposed PIDM in comparison to the standard diffusion model, PG-Diffusion \citep{Shu2023} and CoCoGen \citep{Jacobsen2023}, as well as two recent variants specifically tailored to topology optimization. These propose to modify the sampling process by either using additional guidance models to reduce compliance and improve manufacturability \citep{Maze2023} or enforcing the denoising trajectory to be closer to an iterative optimization trajectory \citep{Giannone2023}. Both methods require auxiliary datasets which complicate model training---in contrast, our method is a much simpler and well-motivated extension of the standard training without requiring any additional data or models. While training of the `main' diffusion model takes longer for the PIDM due to the additional computational complexity and memory requirements, we note that inference is identical to the standard diffusion model and does not require additional surrogate models as, e.g., in \citet{Maze2023}. Results are shown in Table~\ref{tab:topopt_comparison} for the PIDM with sample estimation, which we observed to outperform the mean estimation slightly. We evaluate relevant performance metrics on the two test sets (with seen and unseen boundary conditions, respectively), with generated samples presented in Figure~\ref{fig:topopt_samples} and Appendix~\ref{app:add_topopt_samples}. 

\begin{table}[ht]
    \centering
    \caption{Performance comparison of diffusion model variants for topology optimization. We consider in- and out-of-distribution boundary conditions as described in \citet{Maze2023} and provide the median $\mathcal{R}_{\text{MAE}}$ of the predicted solution fields (where applicable), the median compliance error (MDN \% CE) and the mean volume fraction error (\% VFE). *\citet{Giannone2023} further improve their model by running additional SIMP post-processing steps, but for a consistent comparison, we consider unprocessed samples from the model.}
    \vspace{3pt}
    \setlength{\tabcolsep}{4pt}
    \resizebox{0.99\textwidth}{!}{
    \begin{tabular}{l c c c c c c c}
    \toprule
        Model & Size & \multicolumn{3}{c}{\textit{in-distribution}} & \multicolumn{3}{c}{\textit{out-of-distribution}} \\
        \cmidrule(lr){3-5} \cmidrule(lr){6-8}
        & & $\mathcal{R}_{\text{MAE}}$ $\downarrow$ & MDN \%~CE $\downarrow$ & \%~VFE $\downarrow$ & $\mathcal{R}_{\text{MAE}}$ $\downarrow$ & MDN \%~CE $\downarrow$ & \%~VFE $\downarrow$ \\
        \midrule
        Diffusion & \num{136}M & \num{1.86e-3} & \textbf{\num[detect-weight]{-0.2}} & \num{2.93} & \num{1.97e-3} & \textbf{\num[detect-weight]{0.3}} & \num{2.80} \\
        PG-Diffusion \citep{Shu2023} & \num{136}M & \num{1.82e-3} & \num{0.09} & \num{3.59} & \num{1.92e-3} & 0.81 & \num{3.23} \\
        CoCoGen \citep{Jacobsen2023} & \num{136}M & \num{1.51e-3} & \num{0.14} & \num{4.00} & \num{1.56e-3} & \num{0.58} & \num{3.64} \\
        TopoDiff-G \citep{Maze2023} & \num{239}M & - & \num{0.83} & \textbf{\num[detect-weight]{1.49}} & - & \num{1.82} & \num{1.80} \\
        DOM* \citep{Giannone2023} & \num{121}M & - & \num{0.74} & \num{1.52} & - & \num{3.47} & \textbf{\num[detect-weight]{1.59}} \\
        PIDM (ours) & \num{136}M & \textbf{\num[detect-weight]{1.24e-3}} & \num{0.06} & \num{2.25} & \textbf{\num[detect-weight]{1.29e-3}} & \num{0.56} & \num{1.91} \\ 
        \bottomrule
    \end{tabular}
    }
    \label{tab:topopt_comparison}
\end{table}

\begin{figure}[ht]
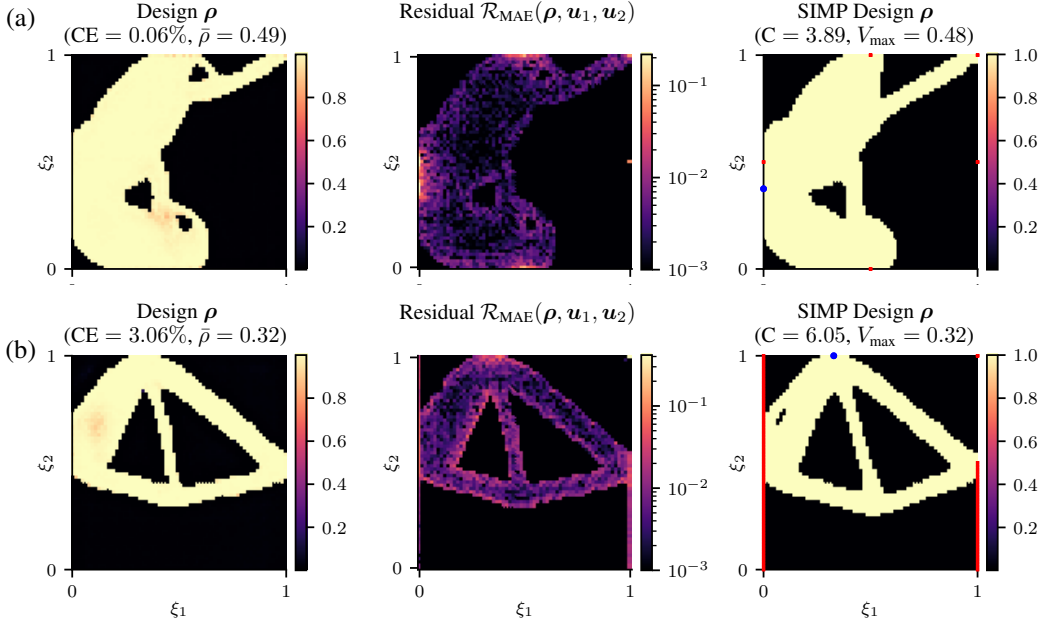

    \begin{center}
        \begin{tikzpicture}
            \node[anchor=south west,inner sep=0] (image) at (0,0){\scalebox{0.85}{\import{figures/mechanics/PIDM_SE_sample_0}{fig.pgf}}};
            \node[anchor=south west,inner sep=0] at (0,-4cm){\scalebox{0.85}{\import{figures/mechanics/PIDM_SE_sample_1}{fig.pgf}}};
            \begin{scope}[x={(image.south east)},y={(image.north west)}]
                \node at (0.0,0.9){(a)};
                \node at (0.0,-0.05){(b)};
            \end{scope}
        \end{tikzpicture}
    \end{center}
    \caption{Generated designs, including the compliance error CE and volume $\bar{\rho}$, and the residual error (based on the displacement fields, not shown) from diffusion models trained on the SIMP dataset and the corresponding SIMP design, including the compliance C and volume $V_{\text{max}}$. We plot a sample (a) from a standard diffusion model and (b) from our proposed PIDM. All samples are conditioned on the out-of-distribution test set. In the SIMP design, we indicate the applied load by a blue dot, and the given boundary conditions in red.}
    \label{fig:topopt_samples}
\end{figure}

Importantly, the PIDM provides not only the optimized designs but also the displacement fields, which are of significant interest for mechanical analysis, e.g., to estimate the stress distribution in the structure. We observe a residual reduced by 33\% and 35\% compared to the standard diffusion model on the in- and out-of-distribution set, respectively, highlighting closer adherence to the mechanical equilibrium. Somewhat surprisingly, we observe that the standard model performs very well in terms of the optimization objective, but also emphasize that it has an increased volume fraction error---thus results are not fully comparable, as more volume will allow for stiffer structures. Contrary, the PIDM only has a slight increase in the volume fraction error but significantly outperforms previous frameworks \citep{Maze2023, Giannone2023} in terms of compliance minimization, despite the absence of auxiliary data and surrogate models. Lastly, adaptations of PG-Diffusion \citep{Shu2023} and CoCoGen \citep{Jacobsen2023} also cannot match the PIDM in terms of the residual (on both test sets), though CoCoGen can reduce it slightly. Additionally, the median compliance and especially the volume fraction error perform significantly worse.

\section{Related Work}

Conceptually closest to our work are two recent contributions by \citet{Shu2023} and \citet{Jacobsen2023}, to which we compare the proposed PIDM extensively in Section~\ref{sec:experiments}. Our studies indicate that the PIDM significantly reduces the residual error of generated samples compared to both variants, especially for the Darcy flow study. Focusing on topology optimization, two recent contributions \citep{Maze2023,Giannone2023} propose improvements that require auxiliary data and/or surrogate models, while we here show that the PIDM, solely by being \textit{physics-informed}, significantly outperforms both methods in terms of the optimal stiffness. In a broader context, \citet{Wang2023} optimized the conditioning variable to create an online dataset of soft robot designs, which includes physical performance and may generate samples with improved physical utility. Yet, the focus was on leveraging pre-trained models to create diverse 3D shapes. \citet{Yuan2022} leveraged diffusion models for human motion synthesis, where inference is altered by projecting the intermediate steps to a physically plausible motion that is verified via a reinforcement learning approach. Similarly, \citet{Christopher2024} projected intermediate steps to the closest point in a feasible set, which is generally unknown for more complicated constraints such as PDEs. In general, such post-processing methods may indeed mitigate some of the mismatches of the generated samples (or fulfill them exactly if the constraints are sufficiently simple). Yet, they are fundamentally limited, as they do not address the underlying distribution learned by the model.

\section{Conclusion}
We have unified the data-driven perspective of diffusion models with a physics-informed paradigm, enabling the models to internalize the constraints that generated samples must adhere to. Our framework significantly outperforms purely data-driven models and prior work, as verified by two highly relevant case studies, and numerical evidence hints that the PIDM obtains a more robust representation of the data distribution that is less prone to overfitting. We hope this work stimulates others to extend their generative model training objective when, besides data, further information---be it in the form of PDEs or other constraints---on the generated samples is available, as is often the case within the realm of scientific machine learning. Future work may explore more sophisticated virtual likelihood variance schedulers, which we here simply coupled to the standard denoising process with a scaling estimated by a parameter sweep. More generally, architectures with a consistent residual evaluation that do not operate on a fixed regular grid should be considered, currently restricting the studies to geometrically simple domains. Remedy can be found in graph-based architectures that can handle arbitrary meshes \citep{Gao2022} or by employing implicit encodings of coordinates \citep{Liu2023}. Additionally, coordinate-based representations allow for exact calculation of the gradients required to evaluate the imposed PDEs via automatic differentiation, though at increased computational cost.

\section*{Acknowledgments}

The authors thank François Mazé for providing the code to generate the additional datasets for the topology optimization case study and Matheus Inguaggiato Nora Rosa for the helpful discussions about the corresponding implementation of the residual evaluation.

\bibliography{bibliography}

\begin{thebibliography}{65}
\providecommand{\natexlab}[1]{#1}
\providecommand{\url}[1]{\texttt{#1}}
\expandafter\ifx\csname urlstyle\endcsname\relax
  \providecommand{\doi}[1]{doi: #1}\else
  \providecommand{\doi}{doi: \begingroup \urlstyle{rm}\Url}\fi

\bibitem[Ba et~al.(2016)Ba, Kiros, and Hinton]{Ba2016}
Jimmy~Lei Ba, Jamie~Ryan Kiros, and Geoffrey~E. Hinton.
\newblock {Layer Normalization}.
\newblock 2016.
\newblock URL \url{http://arxiv.org/abs/1607.06450}.

\bibitem[Baer(2018)]{Baer2018}
Matthias Baer.
\newblock {{\{}findiff{\}} Software Package}, 2018.
\newblock URL \url{https://github.com/maroba/findiff}.

\bibitem[Bastek \& Kochmann(2023)Bastek and Kochmann]{Bastek2023}
Jan-Hendrik Bastek and Dennis~M. Kochmann.
\newblock {Inverse design of nonlinear mechanical metamaterials via video denoising diffusion models}.
\newblock \emph{Nature Machine Intelligence}, 5\penalty0 (12):\penalty0 1466--1475, 2023.
\newblock ISSN 2522-5839.
\newblock \doi{10.1038/s42256-023-00762-x}.
\newblock URL \url{https://www.nature.com/articles/s42256-023-00762-x}.

\bibitem[Bends{\o}e \& Sigmund(2004)Bends{\o}e and Sigmund]{Bendsoe2004}
Martin~P. Bends{\o}e and Ole Sigmund.
\newblock \emph{{Topology Optimization}}.
\newblock Springer Berlin Heidelberg, Berlin, Heidelberg, 2004.
\newblock ISBN 978-3-642-07698-5.
\newblock \doi{10.1007/978-3-662-05086-6}.
\newblock URL \url{http://link.springer.com/10.1007/978-3-662-05086-6}.

\bibitem[Berner et~al.(2024)Berner, Richter, and Ullrich]{Berner2024}
Julius Berner, Lorenz Richter, and Karen Ullrich.
\newblock {An optimal control perspective on diffusion-based generative modeling}.
\newblock \emph{Transactions on Machine Learning Research}, 2024.
\newblock ISSN 2835-8856.
\newblock URL \url{https://openreview.net/forum?id=oYIjw37pTP}.

\bibitem[Buehler(2022)]{Buehler2022}
Markus~J. Buehler.
\newblock {Modeling Atomistic Dynamic Fracture Mechanisms Using a Progressive Transformer Diffusion Model}.
\newblock \emph{Journal of Applied Mechanics}, 89\penalty0 (12), 2022.
\newblock ISSN 0021-8936.
\newblock \doi{10.1115/1.4055730}.
\newblock URL \url{https://asmedigitalcollection.asme.org/appliedmechanics/article/89/12/121009/1146377/Modeling-Atomistic-Dynamic-Fracture-Mechanisms}.

\bibitem[Christopher et~al.(2024)Christopher, Baek, and Fioretto]{Christopher2024}
Jacob~K Christopher, Stephen Baek, and Ferdinando Fioretto.
\newblock Constrained synthesis with projected diffusion models.
\newblock In \emph{The Thirty-eighth Annual Conference on Neural Information Processing Systems}, 2024.
\newblock URL \url{https://openreview.net/forum?id=FsdB3I9Y24}.

\bibitem[Czarnecki et~al.(2017)Czarnecki, Osindero, Jaderberg, {\'{S}}wirszcz, and Pascanu]{Czarnecki2017}
Wojciech~Marian Czarnecki, Simon Osindero, Max Jaderberg, Grzegorz {\'{S}}wirszcz, and Razvan Pascanu.
\newblock {Sobolev Training for Neural Networks}.
\newblock 2017.
\newblock URL \url{http://arxiv.org/abs/1706.04859}.

\bibitem[Dhariwal \& Nichol(2021)Dhariwal and Nichol]{Dhariwal2021}
Prafulla Dhariwal and Alexander~Quinn Nichol.
\newblock Diffusion models beat {GAN}s on image synthesis.
\newblock In \emph{Advances in Neural Information Processing Systems}, 2021.
\newblock URL \url{https://openreview.net/forum?id=AAWuCvzaVt}.

\bibitem[D{\"{u}}reth et~al.(2023)D{\"{u}}reth, Seibert, R{\"{u}}cker, Handford, K{\"{a}}stner, and Gude]{Dureth2023}
Christian D{\"{u}}reth, Paul Seibert, Dennis R{\"{u}}cker, Stephanie Handford, Markus K{\"{a}}stner, and Maik Gude.
\newblock {Conditional diffusion-based microstructure reconstruction}.
\newblock \emph{Materials Today Communications}, 35:\penalty0 105608, 2023.
\newblock ISSN 23524928.
\newblock \doi{10.1016/j.mtcomm.2023.105608}.
\newblock URL \url{https://linkinghub.elsevier.com/retrieve/pii/S2352492823002982}.

\bibitem[Elfwing et~al.(2018)Elfwing, Uchibe, and Doya]{Elfwing2017}
Stefan Elfwing, Eiji Uchibe, and Kenji Doya.
\newblock Sigmoid-weighted linear units for neural network function approximation in reinforcement learning.
\newblock \emph{Neural Networks}, 107:\penalty0 3--11, 2018.
\newblock ISSN 0893-6080.
\newblock \doi{https://doi.org/10.1016/j.neunet.2017.12.012}.
\newblock URL \url{https://www.sciencedirect.com/science/article/pii/S0893608017302976}.

\bibitem[Gao et~al.(2022)Gao, Zahr, and Wang]{Gao2022}
Han Gao, Matthew~J. Zahr, and Jian~Xun Wang.
\newblock {Physics-informed graph neural Galerkin networks: A unified framework for solving PDE-governed forward and inverse problems}.
\newblock \emph{Computer Methods in Applied Mechanics and Engineering}, 390:\penalty0 114502, 2022.
\newblock ISSN 00457825.
\newblock \doi{10.1016/j.cma.2021.114502}.
\newblock URL \url{https://doi.org/10.1016/j.cma.2021.114502}.

\bibitem[Gao et~al.(2017)Gao, Sitharam, and Roitberg]{Gao2017}
Xiang Gao, Meera Sitharam, and Adrian~E. Roitberg.
\newblock {Bounds on the Jensen Gap, and Implications for Mean-Concentrated Distributions}.
\newblock 2017.
\newblock URL \url{http://arxiv.org/abs/1712.05267}.

\bibitem[Giannone et~al.(2023)Giannone, Srivastava, Winther, and Ahmed]{Giannone2023}
Giorgio Giannone, Akash Srivastava, Ole Winther, and Faez Ahmed.
\newblock Aligning optimization trajectories with diffusion models for constrained design generation.
\newblock In \emph{Thirty-seventh Conference on Neural Information Processing Systems}, 2023.
\newblock URL \url{https://openreview.net/forum?id=KTR33hMnMX}.

\bibitem[G{\'{o}}mez \& Guar{\'{i}}n-Zapata(2018)G{\'{o}}mez and Guar{\'{i}}n-Zapata]{Gomez2018}
Juan G{\'{o}}mez and Nicol{\'{a}}s Guar{\'{i}}n-Zapata.
\newblock {SolidsPy: 2D-Finite Element Analysis with Python}, 2018.
\newblock URL \url{https://github.com/AppliedMechanics-EAFIT/SolidsPy}.

\bibitem[Hang et~al.(2023)Hang, Gu, Li, Bao, Chen, Hu, Geng, and Guo]{Hang2023}
Tiankai Hang, Shuyang Gu, Chen Li, Jianmin Bao, Dong Chen, Han Hu, Xin Geng, and Baining Guo.
\newblock {Efficient Diffusion Training via Min-SNR Weighting Strategy}.
\newblock In \emph{Proceedings of the IEEE/CVF International Conference on Computer Vision (ICCV)}, pp.\  7441--7451, 2023.

\bibitem[Ho \& Salimans(2022)Ho and Salimans]{Ho2021}
Jonathan Ho and Tim Salimans.
\newblock {Classifier-Free Diffusion Guidance}.
\newblock 2022.
\newblock URL \url{http://arxiv.org/abs/2207.12598}.

\bibitem[Ho et~al.(2020)Ho, Jain, and Abbeel]{Ho2020}
Jonathan Ho, Ajay Jain, and Pieter Abbeel.
\newblock Denoising diffusion probabilistic models, 2020.
\newblock URL \url{https://arxiv.org/abs/2006.11239}.

\bibitem[Ho et~al.(2022{\natexlab{a}})Ho, Chan, Saharia, Whang, Gao, Gritsenko, Kingma, Poole, Norouzi, Fleet, and Salimans]{Ho2022}
Jonathan Ho, William Chan, Chitwan Saharia, Jay Whang, Ruiqi Gao, Alexey Gritsenko, Diederik~P. Kingma, Ben Poole, Mohammad Norouzi, David~J. Fleet, and Tim Salimans.
\newblock {Imagen Video: High Definition Video Generation with Diffusion Models}, 2022{\natexlab{a}}.
\newblock URL \url{https://arxiv.org/abs/2210.02303}.

\bibitem[Ho et~al.(2022{\natexlab{b}})Ho, Salimans, Gritsenko, Chan, Norouzi, and Fleet]{Ho2022a}
Jonathan Ho, Tim Salimans, Alexey Gritsenko, William Chan, Mohammad Norouzi, and David~J. Fleet.
\newblock {Video Diffusion Models}, 2022{\natexlab{b}}.
\newblock URL \url{https://arxiv.org/abs/2204.03458}.

\bibitem[Hoogeboom et~al.(2022)Hoogeboom, Satorras, Vignac, and Welling]{Hoogeboom2022}
Emiel Hoogeboom, Victor~Garcia Satorras, Cl{\'{e}}ment Vignac, and Max Welling.
\newblock {Equivariant Diffusion for Molecule Generation in 3D}.
\newblock 2022.
\newblock URL \url{http://arxiv.org/abs/2203.17003}.

\bibitem[Jacobsen et~al.(2024)Jacobsen, Zhuang, and Duraisamy]{Jacobsen2023}
Christian Jacobsen, Yilin Zhuang, and Karthik Duraisamy.
\newblock {CoCoGen: Physically-Consistent and Conditioned Score-based Generative Models for Forward and Inverse Problems}, 2024.
\newblock URL \url{https://arxiv.org/abs/2312.10527}.

\bibitem[Katharopoulos et~al.(2020)Katharopoulos, Vyas, Pappas, and Fleuret]{Katharopoulos2020}
Angelos Katharopoulos, Apoorv Vyas, Nikolaos Pappas, and Fran{\c{c}}ois Fleuret.
\newblock {Transformers are RNNs: Fast Autoregressive Transformers with Linear Attention}.
\newblock 2020.
\newblock URL \url{http://arxiv.org/abs/2006.16236}.

\bibitem[Kingma \& Ba(2014)Kingma and Ba]{Kingma2015}
Diederik~P. Kingma and Jimmy Ba.
\newblock {Adam: A Method for Stochastic Optimization}.
\newblock \emph{3rd International Conference on Learning Representations, ICLR 2015 - Conference Track Proceedings}, pp.\  1--15, 2014.
\newblock URL \url{http://arxiv.org/abs/1412.6980}.

\bibitem[Kingma \& Welling(2013)Kingma and Welling]{Kingma2013}
Diederik~P Kingma and Max Welling.
\newblock {Auto-Encoding Variational Bayes}.
\newblock 2013.
\newblock URL \url{http://arxiv.org/abs/1312.6114}.

\bibitem[Kingma et~al.(2021)Kingma, Salimans, Poole, and Ho]{Kingma2021}
Diederik~P Kingma, Tim Salimans, Ben Poole, and Jonathan Ho.
\newblock {Variational Diffusion Models}.
\newblock In \emph{Advances in Neural Information Processing Systems}, 2021.
\newblock URL \url{https://openreview.net/forum?id=2LdBqxc1Yv}.

\bibitem[Kong et~al.(2021)Kong, Ping, Huang, Zhao, and Catanzaro]{Kong2020}
Zhifeng Kong, Wei Ping, Jiaji Huang, Kexin Zhao, and Bryan Catanzaro.
\newblock Diffwave: A versatile diffusion model for audio synthesis.
\newblock In \emph{International Conference on Learning Representations}, 2021.
\newblock URL \url{https://openreview.net/forum?id=a-xFK8Ymz5J}.

\bibitem[Li et~al.(2023)Li, Lanotte, Buzzicotti, Bonaccorso, and Biferale]{Li2023}
Tianyi Li, Alessandra~S. Lanotte, Michele Buzzicotti, Fabio Bonaccorso, and Luca Biferale.
\newblock {Multi-Scale Reconstruction of Turbulent Rotating Flows with Generative Diffusion Models}.
\newblock \emph{Atmosphere}, 15\penalty0 (1):\penalty0 60, 2023.
\newblock ISSN 2073-4433.
\newblock URL \url{https://www.mdpi.com/2073-4433/15/1/60}.

\bibitem[Li et~al.(2022)Li, Thickstun, Gulrajani, Liang, and Hashimoto]{Li2022}
Xiang~Lisa Li, John Thickstun, Ishaan Gulrajani, Percy Liang, and Tatsunori Hashimoto.
\newblock Diffusion-{LM} improves controllable text generation.
\newblock In \emph{Advances in Neural Information Processing Systems}, 2022.
\newblock URL \url{https://openreview.net/forum?id=3s9IrEsjLyk}.

\bibitem[Lienen et~al.(2024)Lienen, L{\"u}dke, Hansen-Palmus, and G{\"u}nnemann]{Lienen2023}
Marten Lienen, David L{\"u}dke, Jan Hansen-Palmus, and Stephan G{\"u}nnemann.
\newblock {From Zero to Turbulence: Generative Modeling for 3D Flow Simulation}.
\newblock In \emph{The Twelfth International Conference on Learning Representations}, 2024.
\newblock URL \url{https://openreview.net/forum?id=ZhlwoC1XaN}.

\bibitem[Liu et~al.(2023)Liu, Li, Shang, Liu, Wan, Feng, and Timofte]{Liu2023}
Hongying Liu, Zekun Li, Fanhua Shang, Yuanyuan Liu, Liang Wan, Wei Feng, and Radu Timofte.
\newblock {Arbitrary-scale super-resolution via deep learning: A comprehensive survey}.
\newblock \emph{Information Fusion}, pp.\  102015, 2023.
\newblock ISSN 15662535.
\newblock URL \url{https://doi.org/10.1016/j.inffus.2023.102015}.

\bibitem[Maz{\'{e}} \& Ahmed(2023)Maz{\'{e}} and Ahmed]{Maze2023}
Fran{\c{c}}ois Maz{\'{e}} and Faez Ahmed.
\newblock {Diffusion Models Beat GANs on Topology Optimization}.
\newblock \emph{Proceedings of the 37th AAAI Conference on Artificial Intelligence, AAAI 2023}, 37:\penalty0 9108--9116, 2023.
\newblock ISSN 2159-5399.
\newblock \doi{10.1609/aaai.v37i8.26093}.

\bibitem[Nichol et~al.(2021)Nichol, Dhariwal, Ramesh, Shyam, Mishkin, McGrew, Sutskever, and Chen]{Nichol2021}
Alex Nichol, Prafulla Dhariwal, Aditya Ramesh, Pranav Shyam, Pamela Mishkin, Bob McGrew, Ilya Sutskever, and Mark Chen.
\newblock {GLIDE: Towards Photorealistic Image Generation and Editing with Text-Guided Diffusion Models}.
\newblock 2021.
\newblock URL \url{http://arxiv.org/abs/2112.10741}.

\bibitem[Nie et~al.(2021)Nie, Lin, Jiang, and Kara]{Nie2021}
Zhenguo Nie, Tong Lin, Haoliang Jiang, and Levent~Burak Kara.
\newblock {TopologyGAN: Topology Optimization Using Generative Adversarial Networks Based on Physical Fields Over the Initial Domain}.
\newblock \emph{Journal of Mechanical Design}, 143\penalty0 (3), 2021.
\newblock ISSN 1050-0472.
\newblock \doi{10.1115/1.4049533}.
\newblock URL \url{https://asmedigitalcollection.asme.org/mechanicaldesign/article/doi/10.1115/1.4049533/1094063/TopologyGAN-Topology-Optimization-Using-Generative}.

\bibitem[Niu et~al.(2020)Niu, Song, Song, Zhao, Grover, and Ermon]{Niu2020}
Chenhao Niu, Yang Song, Jiaming Song, Shengjia Zhao, Aditya Grover, and Stefano Ermon.
\newblock Permutation invariant graph generation via score-based generative modeling.
\newblock In \emph{Proceedings of the Twenty Third International Conference on Artificial Intelligence and Statistics}, volume 108 of \emph{Proceedings of Machine Learning Research}, pp.\  4474--4484. PMLR, 26--28 Aug 2020.
\newblock URL \url{https://proceedings.mlr.press/v108/niu20a.html}.

\bibitem[Paszke et~al.(2019)Paszke, Gross, Massa, Lerer, Bradbury, Chanan, Killeen, Lin, Gimelshein, Antiga, Desmaison, K{\"{o}}pf, Yang, DeVito, Raison, Tejani, Chilamkurthy, Steiner, Fang, Bai, and Chintala]{Paszke2019}
Adam Paszke, Sam Gross, Francisco Massa, Adam Lerer, James Bradbury, Gregory Chanan, Trevor Killeen, Zeming Lin, Natalia Gimelshein, Luca Antiga, Alban Desmaison, Andreas K{\"{o}}pf, Edward Yang, Zach DeVito, Martin Raison, Alykhan Tejani, Sasank Chilamkurthy, Benoit Steiner, Lu~Fang, Junjie Bai, and Soumith Chintala.
\newblock {PyTorch: An imperative style, high-performance deep learning library}.
\newblock \emph{Advances in Neural Information Processing Systems}, 2019.
\newblock ISSN 23318422.
\newblock URL \url{http://arxiv.org/abs/1912.01703}.

\bibitem[Raissi et~al.(2019)Raissi, Perdikaris, and Karniadakis]{Raissi2019}
M.~Raissi, P.~Perdikaris, and G.E. Karniadakis.
\newblock {Physics-informed neural networks: A deep learning framework for solving forward and inverse problems involving nonlinear partial differential equations}.
\newblock \emph{Journal of Computational Physics}, 378:\penalty0 686--707, 2019.
\newblock ISSN 00219991.
\newblock \doi{10.1016/j.jcp.2018.10.045}.
\newblock URL \url{https://linkinghub.elsevier.com/retrieve/pii/S0021999118307125}.

\bibitem[Rixner \& Koutsourelakis(2021)Rixner and Koutsourelakis]{Rixner2021}
Maximilian Rixner and Phaedon~Stelios Koutsourelakis.
\newblock {A probabilistic generative model for semi-supervised training of coarse-grained surrogates and enforcing physical constraints through virtual observables}.
\newblock \emph{Journal of Computational Physics}, 434, 2021.
\newblock ISSN 10902716.
\newblock \doi{10.1016/j.jcp.2021.110218}.

\bibitem[Rombach et~al.(2021)Rombach, Blattmann, Lorenz, Esser, and Ommer]{Rombach2021}
Robin Rombach, Andreas Blattmann, Dominik Lorenz, Patrick Esser, and Bj{\"{o}}rn Ommer.
\newblock {High-Resolution Image Synthesis with Latent Diffusion Models}.
\newblock 2021.
\newblock URL \url{http://arxiv.org/abs/2112.10752}.

\bibitem[Ronneberger et~al.(2015)Ronneberger, Fischer, and Brox]{Ronneberger2015}
Olaf Ronneberger, Philipp Fischer, and Thomas Brox.
\newblock {U-Net: Convolutional Networks for Biomedical Image Segmentation}.
\newblock 2015.
\newblock URL \url{http://arxiv.org/abs/1505.04597}.

\bibitem[Salimans \& Ho(2022)Salimans and Ho]{Salimans2022}
Tim Salimans and Jonathan Ho.
\newblock Progressive distillation for fast sampling of diffusion models, 2022.
\newblock URL \url{https://arxiv.org/abs/2202.00512}.

\bibitem[Sanokowski et~al.(2024)Sanokowski, Hochreiter, and Lehner]{Sanokowski2024}
Sebastian Sanokowski, Sepp Hochreiter, and Sebastian Lehner.
\newblock A diffusion model framework for unsupervised neural combinatorial optimization, 2024.
\newblock URL \url{https://arxiv.org/abs/2406.01661}.

\bibitem[Sardar et~al.(2023)Sardar, Skillen, Zimo{\'{n}}, Draycott, and Revell]{Sardar2023}
Mohammed Sardar, Alex Skillen, Ma{\l}gorzata~J. Zimo{\'{n}}, Samuel Draycott, and Alistair Revell.
\newblock {Spectrally Decomposed Diffusion Models for Generative Turbulence Recovery}.
\newblock 2023.
\newblock URL \url{http://arxiv.org/abs/2312.15029}.

\bibitem[Shu et~al.(2023)Shu, Li, and {Barati Farimani}]{Shu2023}
Dule Shu, Zijie Li, and Amir {Barati Farimani}.
\newblock {A physics-informed diffusion model for high-fidelity flow field reconstruction}.
\newblock \emph{Journal of Computational Physics}, 478:\penalty0 111972, 2023.
\newblock ISSN 10902716.
\newblock \doi{10.1016/j.jcp.2023.111972}.
\newblock URL \url{https://doi.org/10.1016/j.jcp.2023.111972}.

\bibitem[Sohl-Dickstein et~al.(2015)Sohl-Dickstein, Weiss, Maheswaranathan, and Ganguli]{Sohl-Dickstein2015}
Jascha Sohl-Dickstein, Eric~A. Weiss, Niru Maheswaranathan, and Surya Ganguli.
\newblock {Deep Unsupervised Learning using Nonequilibrium Thermodynamics}.
\newblock \emph{ICML}, 2015.
\newblock URL \url{http://arxiv.org/abs/1503.03585}.

\bibitem[Song et~al.(2021{\natexlab{a}})Song, Meng, and Ermon]{Song2020}
Jiaming Song, Chenlin Meng, and Stefano Ermon.
\newblock {Denoising Diffusion Implicit Models}.
\newblock In \emph{International Conference on Learning Representations}, 2021{\natexlab{a}}.
\newblock URL \url{https://openreview.net/forum?id=St1giarCHLP}.

\bibitem[Song et~al.(2021{\natexlab{b}})Song, Sohl-Dickstein, Kingma, Kumar, Ermon, and Poole]{Song2021}
Yang Song, Jascha Sohl-Dickstein, Diederik~P Kingma, Abhishek Kumar, Stefano Ermon, and Ben Poole.
\newblock {Score-Based Generative Modeling through Stochastic Differential Equations}.
\newblock In \emph{International Conference on Learning Representations}, 2021{\natexlab{b}}.
\newblock URL \url{https://openreview.net/forum?id=PxTIG12RRHS}.

\bibitem[Song et~al.(2023)Song, Dhariwal, Chen, and Sutskever]{Song2023}
Yang Song, Prafulla Dhariwal, Mark Chen, and Ilya Sutskever.
\newblock {Consistency Models}.
\newblock 2023.
\newblock URL \url{http://arxiv.org/abs/2303.01469}.

\bibitem[Vargas et~al.(2023)Vargas, Grathwohl, and Doucet]{Vargas2023}
Francisco Vargas, Will~Sussman Grathwohl, and Arnaud Doucet.
\newblock {Denoising Diffusion Samplers}.
\newblock In \emph{The Eleventh International Conference on Learning Representations}, 2023.
\newblock URL \url{https://openreview.net/forum?id=8pvnfTAbu1f}.

\bibitem[Vaswani et~al.(2017)Vaswani, Shazeer, Parmar, Uszkoreit, Jones, Gomez, Kaiser, and Polosukhin]{Vaswani2017}
Ashish Vaswani, Noam Shazeer, Niki Parmar, Jakob Uszkoreit, Llion Jones, Aidan~N. Gomez, Lukasz Kaiser, and Illia Polosukhin.
\newblock {Attention Is All You Need}.
\newblock 2017.
\newblock URL \url{http://arxiv.org/abs/1706.03762}.

\bibitem[Vincent(2011)]{Vincent2011}
Pascal Vincent.
\newblock {A Connection Between Score Matching and Denoising Autoencoders}.
\newblock \emph{Neural Computation}, 23\penalty0 (7):\penalty0 1661--1674, 2011.
\newblock ISSN 0899-7667.
\newblock \doi{10.1162/NECO_a_00142}.
\newblock URL \url{https://direct.mit.edu/neco/article/23/7/1661-1674/7677}.

\bibitem[Virtanen et~al.(2020)Virtanen, Gommers, Oliphant, Haberland, Reddy, Cournapeau, Burovski, Peterson, Weckesser, Bright, van~der Walt, Brett, Wilson, Millman, Mayorov, Nelson, Jones, Kern, Larson, Carey, Polat, Feng, Moore, VanderPlas, Laxalde, Perktold, Cimrman, Henriksen, Quintero, Harris, Archibald, Ribeiro, Pedregosa, van Mulbregt, Vijaykumar, Bardelli, Rothberg, Hilboll, Kloeckner, Scopatz, Lee, Rokem, Woods, Fulton, Masson, H{\"{a}}ggstr{\"{o}}m, Fitzgerald, Nicholson, Hagen, Pasechnik, Olivetti, Martin, Wieser, Silva, Lenders, Wilhelm, Young, Price, Ingold, Allen, Lee, Audren, Probst, Dietrich, Silterra, Webber, Slavi{\v{c}}, Nothman, Buchner, Kulick, Sch{\"{o}}nberger, {de Miranda Cardoso}, Reimer, Harrington, Rodr{\'{i}}guez, Nunez-Iglesias, Kuczynski, Tritz, Thoma, Newville, K{\"{u}}mmerer, Bolingbroke, Tartre, Pak, Smith, Nowaczyk, Shebanov, Pavlyk, Brodtkorb, Lee, McGibbon, Feldbauer, Lewis, Tygier, Sievert, Vigna, Peterson, More, Pudlik, Oshima, Pingel, Robitaille, Spura, Jones, Cera,
  Leslie, Zito, Krauss, Upadhyay, Halchenko, and V{\'{a}}zquez-Baeza]{Virtanen2020}
Pauli Virtanen, Ralf Gommers, Travis~E. Oliphant, Matt Haberland, Tyler Reddy, David Cournapeau, Evgeni Burovski, Pearu Peterson, Warren Weckesser, Jonathan Bright, St{\'{e}}fan~J. van~der Walt, Matthew Brett, Joshua Wilson, K.~Jarrod Millman, Nikolay Mayorov, Andrew R.~J. Nelson, Eric Jones, Robert Kern, Eric Larson, C~J Carey, İlhan Polat, Yu~Feng, Eric~W. Moore, Jake VanderPlas, Denis Laxalde, Josef Perktold, Robert Cimrman, Ian Henriksen, E.~A. Quintero, Charles~R. Harris, Anne~M. Archibald, Ant{\^{o}}nio~H. Ribeiro, Fabian Pedregosa, Paul van Mulbregt, Aditya Vijaykumar, Alessandro~Pietro Bardelli, Alex Rothberg, Andreas Hilboll, Andreas Kloeckner, Anthony Scopatz, Antony Lee, Ariel Rokem, C.~Nathan Woods, Chad Fulton, Charles Masson, Christian H{\"{a}}ggstr{\"{o}}m, Clark Fitzgerald, David~A. Nicholson, David~R. Hagen, Dmitrii~V. Pasechnik, Emanuele Olivetti, Eric Martin, Eric Wieser, Fabrice Silva, Felix Lenders, Florian Wilhelm, G.~Young, Gavin~A. Price, Gert-Ludwig Ingold, Gregory~E. Allen,
  Gregory~R. Lee, Herv{\'{e}} Audren, Irvin Probst, J{\"{o}}rg~P. Dietrich, Jacob Silterra, James~T Webber, Janko Slavi{\v{c}}, Joel Nothman, Johannes Buchner, Johannes Kulick, Johannes~L. Sch{\"{o}}nberger, Jos{\'{e}}~Vin{\'{i}}cius {de Miranda Cardoso}, Joscha Reimer, Joseph Harrington, Juan Luis~Cano Rodr{\'{i}}guez, Juan Nunez-Iglesias, Justin Kuczynski, Kevin Tritz, Martin Thoma, Matthew Newville, Matthias K{\"{u}}mmerer, Maximilian Bolingbroke, Michael Tartre, Mikhail Pak, Nathaniel~J. Smith, Nikolai Nowaczyk, Nikolay Shebanov, Oleksandr Pavlyk, Per~A. Brodtkorb, Perry Lee, Robert~T. McGibbon, Roman Feldbauer, Sam Lewis, Sam Tygier, Scott Sievert, Sebastiano Vigna, Stefan Peterson, Surhud More, Tadeusz Pudlik, Takuya Oshima, Thomas~J. Pingel, Thomas~P. Robitaille, Thomas Spura, Thouis~R. Jones, Tim Cera, Tim Leslie, Tiziano Zito, Tom Krauss, Utkarsh Upadhyay, Yaroslav~O. Halchenko, and Yoshiki V{\'{a}}zquez-Baeza.
\newblock {SciPy 1.0: fundamental algorithms for scientific computing in Python}.
\newblock \emph{Nature Methods}, 17\penalty0 (3):\penalty0 261--272, 2020.
\newblock ISSN 1548-7091.
\newblock \doi{10.1038/s41592-019-0686-2}.
\newblock URL \url{https://www.nature.com/articles/s41592-019-0686-2}.

\bibitem[Vlassis \& Sun(2023)Vlassis and Sun]{Vlassis2023}
Nikolaos~N. Vlassis and WaiChing Sun.
\newblock {Denoising diffusion algorithm for inverse design of microstructures with fine-tuned nonlinear material properties}.
\newblock \emph{Computer Methods in Applied Mechanics and Engineering}, 413:\penalty0 116126, 2023.
\newblock ISSN 00457825.
\newblock \doi{10.1016/j.cma.2023.116126}.
\newblock URL \url{https://linkinghub.elsevier.com/retrieve/pii/S0045782523002505}.

\bibitem[Wang(2022)]{Wang2022}
Phil Wang.
\newblock {Implementation of Imagen, Google's Text-to-Image Neural Network that beats DALL-E2, in Pytorch}, 2022.
\newblock URL \url{https://github.com/lucidrains/imagen-pytorch}.

\bibitem[Wang et~al.(2021)Wang, Teng, and Perdikaris]{Wang2021}
Sifan Wang, Yujun Teng, and Paris Perdikaris.
\newblock {Understanding and Mitigating Gradient Flow Pathologies in Physics-Informed Neural Networks}.
\newblock \emph{SIAM Journal on Scientific Computing}, 43\penalty0 (5):\penalty0 A3055--A3081, 2021.
\newblock ISSN 1064-8275.
\newblock \doi{10.1137/20M1318043}.
\newblock URL \url{https://epubs.siam.org/doi/10.1137/20M1318043}.

\bibitem[Wang et~al.(2023)Wang, Zheng, Ma, Du, Kim, Spielberg, Tenenbaum, Gan, and Rus]{Wang2023}
Tsun-Hsuan Wang, Juntian Zheng, Pingchuan Ma, Yilun Du, Byungchul Kim, Andrew~Everett Spielberg, Joshua~B. Tenenbaum, Chuang Gan, and Daniela Rus.
\newblock Diffusebot: Breeding soft robots with physics-augmented generative diffusion models.
\newblock In \emph{Thirty-seventh Conference on Neural Information Processing Systems}, 2023.
\newblock URL \url{https://openreview.net/forum?id=1zo4iioUEs}.

\bibitem[Wu et~al.(2023)Wu, Fan, Liu, Zheng, Gong, yelong shen, Jiao, Li, zhongyu wei, Guo, Duan, and Chen]{Wu2023}
Tong Wu, Zhihao Fan, Xiao Liu, Hai-Tao Zheng, Yeyun Gong, yelong shen, Jian Jiao, Juntao Li, zhongyu wei, Jian Guo, Nan Duan, and Weizhu Chen.
\newblock {AR}-diffusion: Auto-regressive diffusion model for text generation.
\newblock In \emph{Thirty-seventh Conference on Neural Information Processing Systems}, 2023.
\newblock URL \url{https://openreview.net/forum?id=0EG6qUQ4xE}.

\bibitem[Wu \& He(2018)Wu and He]{Wu2018}
Yuxin Wu and Kaiming He.
\newblock {Group Normalization}.
\newblock 2018.
\newblock URL \url{http://arxiv.org/abs/1803.08494}.

\bibitem[Xie et~al.(2022)Xie, Fu, Ganea, Barzilay, and Jaakkola]{Xie2021}
Tian Xie, Xiang Fu, Octavian-Eugen Ganea, Regina Barzilay, and Tommi~S. Jaakkola.
\newblock Crystal diffusion variational autoencoder for periodic material generation.
\newblock In \emph{International Conference on Learning Representations}, 2022.
\newblock URL \url{https://openreview.net/forum?id=03RLpj-tc_}.

\bibitem[Xu et~al.(2022)Xu, Yu, Song, Shi, Ermon, and Tang]{Xu2022}
Minkai Xu, Lantao Yu, Yang Song, Chence Shi, Stefano Ermon, and Jian Tang.
\newblock {GeoDiff}: A geometric diffusion model for molecular conformation generation.
\newblock In \emph{International Conference on Learning Representations}, 2022.
\newblock URL \url{https://openreview.net/forum?id=PzcvxEMzvQC}.

\bibitem[Yang et~al.(2024)Yang, Zhang, Song, Hong, Xu, Zhao, Zhang, Cui, and Yang]{Yang2024}
Ling Yang, Zhilong Zhang, Yang Song, Shenda Hong, Runsheng Xu, Yue Zhao, Wentao Zhang, Bin Cui, and Ming-Hsuan Yang.
\newblock {Diffusion Models: A Comprehensive Survey of Methods and Applications}.
\newblock \emph{ACM Computing Surveys}, 56\penalty0 (4):\penalty0 1--39, 2024.
\newblock ISSN 0360-0300.
\newblock \doi{10.1145/3626235}.
\newblock URL \url{https://dl.acm.org/doi/10.1145/3626235}.

\bibitem[Yuan et~al.(2022)Yuan, Song, Iqbal, Vahdat, and Kautz]{Yuan2022}
Ye~Yuan, Jiaming Song, Umar Iqbal, Arash Vahdat, and Jan Kautz.
\newblock {PhysDiff: Physics-Guided Human Motion Diffusion Model}.
\newblock 2022.
\newblock URL \url{http://arxiv.org/abs/2212.02500}.

\bibitem[Zhang \& Chen(2022)Zhang and Chen]{Zhang2022a}
Qinsheng Zhang and Yongxin Chen.
\newblock {Path Integral Sampler: A Stochastic Control Approach For Sampling}.
\newblock In \emph{International Conference on Learning Representations}, 2022.
\newblock URL \url{https://openreview.net/forum?id=_uCb2ynRu7Y}.

\bibitem[Zhang \& Chen(2023)Zhang and Chen]{Zhang2022}
Qinsheng Zhang and Yongxin Chen.
\newblock Fast sampling of diffusion models with exponential integrator.
\newblock In \emph{The Eleventh International Conference on Learning Representations}, 2023.
\newblock URL \url{https://openreview.net/forum?id=Loek7hfb46P}.

\bibitem[Zhu \& Zabaras(2018)Zhu and Zabaras]{Zhu2018}
Yinhao Zhu and Nicholas Zabaras.
\newblock {Bayesian deep convolutional encoder–decoder networks for surrogate modeling and uncertainty quantification}.
\newblock \emph{Journal of Computational Physics}, 366:\penalty0 415--447, 2018.
\newblock ISSN 00219991.
\newblock \doi{10.1016/j.jcp.2018.04.018}.
\newblock URL \url{https://linkinghub.elsevier.com/retrieve/pii/S0021999118302341}.

\end{thebibliography}
\bibliographystyle{iclr2025_conference}

\newpage

\appendix
\section{Appendix}

\subsection{Consistency}
\label{app:consistency}

We here give a theoretical argument for the fact that, in an ideal setting, diffusion models trained via score-matching with an additional virtual (residual) likelihood term continue to recover the true distribution. Adopting the perspective of score-based models \citep{Song2021}, we consider a continuous time variable $t\in[0, T]$ instead of the discrete setting. The forward diffusion process of some data $\bfx\in\mathbb{R}^D$ can then generally be described by the stochastic differential equation (SDE)
\be
    \dd \bfx = \bfF_t\bfx \dd t + \bfG_t \dd \bfw,
    \label{eq:fwd_sde}
\ee
where $\bfw$ is a Wiener process, and the drift and diffusion coefficients $\bfF_t\in\mathbb{R}^{D\times D}$ and $\bfG_t\in\mathbb{R}^{D\times D}$, respectively, are chosen so that the transition kernel is a simple Gaussian that can be computed in closed form. Here, the popular diffusion variants proposed by \citet{Ho2020} and DDIM \citep{Song2021} correspond to their continuous counterparts $\bfF_t=\frac{1}{2}\frac{\dd \log \alpha_t}{\dd t}\bfI$ and $\bfG_t=\sqrt{-\frac{\dd \log \alpha_t}{\dd t}}\bfI$, where $\alpha_t$ is continuously decreasing from $\alpha_0\approx1$ to $\alpha_T\approx0$. There is no unique reverse diffusion process, but common choices of the corresponding reverse SDE are included in the parameterization proposed by \citet{Zhang2022}, 
\be
    \dd \bfx=\left[\bfF_t \bfx-\frac{1+\lambda^2}{2} \bfG_t \bfG_t^T \nabla \log p_t(\bfx)\right] \dd\bar{t}+\lambda \bfG_t \dd \bar{\bfw},
    \label{eq:rev_sde}
\ee
where $\bar{\bfw}$ is a Wiener process that runs, as $\dd \bar{t}$, backwards in time, and $\lambda\geq0$. If the score $\nabla \log p_t(\bfx)$ is known, we can generate new samples by sampling from the known prior distribution $p(\bfx_T)$ and applying \eqref{eq:rev_sde}. DDPM 
\citep{Ho2020} and (deterministic) DDIM \citep{Song2020} then correspond to certain parameterizations ($\lambda=1$ and $\lambda=0$, respectively) and discretizations of \eqref{eq:rev_sde} \citep{Song2021,Zhang2022}. We can show that a straightforward extension of the usual score-matching objective with the virtual residual likelihood is consistent as it continues to recover the data distribution:

\begin{proposition}
\textbf{(Consistency)} Let \( p(\bfx_0) \) be a distribution with samples \( \bfx_0 \sim p(\bfx_0) \) satisfying some constraint \( \boldsymbol{\mathcal{R}}(\bfx_0) = \boldsymbol{0} \). Consider
\be
\bfs_{\operatorname{opt}} = \arg\min_{\bfs} \mathbb{E}_{t \sim \operatorname{Unif}[0, T]} \mathbb{E}_{p(\bfx_0) p(\bfx_t \vert \bfx_0)} \left[\Lambda(t) \left\| \nabla \log p(\bfx_t \vert \bfx_0) - \bfs(\bfx_t, t) \right\|^2 - \log q_{\mathcal{R}}(\hat{\bfr} \vert \bfx_0^*(\bfx_t)) \right],
\ee
where $\Lambda(t)>0$ is a time-dependent weight and $\bfx_0^*$ is obtained by solving the reverse SDE \eqref{eq:rev_sde} initiated at $\bfx_t$ with score $\bfs(\bfx_t,t)$. Then, solving the reverse SDE \eqref{eq:rev_sde} from $\bfx_T\sim p(\bfx_T)$ with $\bfs_{\operatorname{opt}}$ as the score for $\bfx_0$ corresponds to sampling from $p(\bfx_0)$.
\end{proposition}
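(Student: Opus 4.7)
The plan is to decompose the combined objective into its two summands and show that the true marginal score $\bfs^\star(\bfx_t, t) := \nabla \log p_t(\bfx_t)$ minimizes each simultaneously. Since the first summand by itself already pins down the minimizer uniquely almost everywhere, this identifies $\bfs_{\operatorname{opt}} = \bfs^\star$, after which the conclusion follows from the standard fact (Anderson's time-reversal) that the reverse SDE driven by the exact score reproduces the forward marginals. For the first summand, I would invoke the classical denoising score-matching identity of Vincent: it differs from the explicit score-matching loss $\mathbb{E}_t \mathbb{E}_{p_t(\bfx_t)}[\Lambda(t)\|\nabla \log p_t(\bfx_t) - \bfs(\bfx_t, t)\|^2]$ only by an additive constant independent of $\bfs$, so since $\Lambda(t) > 0$ this term is uniquely minimized (a.e.) by $\bfs = \bfs^\star$.

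For the second summand, observe that $\bfx_0^*(\bfx_t)$ is obtained by integrating the reverse SDE from $\bfx_t$ with the score field $\bfs$. When $\bfs = \bfs^\star$, Anderson's theorem ensures that $\bfx_0^*$ has the true conditional law $p(\bfx_0 \vert \bfx_t)$ for $p_t$-almost every $\bfx_t$. Because every $\bfx_0 \in \supp p$ satisfies $\boldsymbol{\mathcal{R}}(\bfx_0) = \bfzero$ by hypothesis, so does $\bfx_0^*$ almost surely, and the Gaussian virtual likelihood $q_{\mathcal{R}}(\bfzero \vert \bfx_0^*) = \calN(\bfzero; \boldsymbol{\mathcal{R}}(\bfx_0^*), \sigma^2 \bfI)$ attains its pointwise maximum almost surely. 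Hence $-\log q_{\mathcal{R}}$ is also minimized (pointwise, a.s.) at $\bfs = \bfs^\star$. Combining the two summands, $\bfs_{\operatorname{opt}} = \bfs^\star$, and running the reverse SDE from $\bfx_T \sim p(\bfx_T)$ with $\bfs^\star$ samples from $p(\bfx_0)$.

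The main obstacle is making the second step rigorous. The subtlety is that when driven by $\bfs^\star$, the reverse SDE conditioned on a specific $\bfx_t$ must have law $p(\cdot \vert \bfx_t)$---not merely the marginal $p(\bfx_0)$---which relies on a pathwise time-reversal representation rather than equality of marginals alone. A cleaner way to sidestep this is to remark that $\bfs^\star$ already strictly minimizes the first summand up to measure-zero modifications, so the virtual-likelihood term only needs to not strictly prefer a different score. This weaker requirement is automatic: by construction $-\log q_{\mathcal{R}}(\bfzero \vert \cdot)$ attains its global supremum whenever $\boldsymbol{\mathcal{R}}(\bfx_0^*) = \bfzero$, and $\bfs^\star$ produces such an $\bfx_0^*$ almost surely, so no other score can strictly improve on this summand.
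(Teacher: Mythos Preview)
Your argument is essentially the same as the paper's: invoke Vincent's equivalence between denoising and explicit score matching to show the first summand is (uniquely) minimized by the true marginal score, then use time-reversal of diffusions to argue that with this score the reverse SDE produces $\bfx_0^*$ supported on $\{\boldsymbol{\mathcal{R}}=\bfzero\}$, so the virtual-likelihood term is simultaneously minimized. The paper cites \cite{Vincent2011} and \cite[Prop.~1]{Zhang2022} for exactly these two steps and does not spell out the uniqueness reasoning you add; your extra care there is welcome, and your observation that the marginal law of $\bfx_0^*$ (rather than the conditional given $\bfx_t$) already suffices matches how the paper implicitly argues. One slip: in your final paragraph you write that $-\log q_{\mathcal{R}}(\bfzero\vert\cdot)$ ``attains its global supremum'' at $\boldsymbol{\mathcal{R}}(\bfx_0^*)=\bfzero$; you mean infimum.
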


\begin{proof}
It is well-known that score matching $\nabla \log p\left(\bfx_t \vert \bfx_0\right)$ is equivalent to matching $\nabla \log p(\bfx_t)$ \citep{Vincent2011}. Assuming perfect recovery of the score, i.e., \( \bfs(\bfx, t) = \nabla \log p(\bfx_t) \) for all \( \bfx_t, t \), the marginal distribution $p^*(\bfx_t)$ of \eqref{eq:rev_sde} matches the forward diffusion $p(\bfx_t)$ for all $0\leq t\leq T$ independent of $\lambda$ when $p(\bfx_T)=p^*(\bfx_T)$ \cite[Prop.~1]{Zhang2022}. To generate new samples, we may start from any latent $\bfx_t^* \sim p^*(\bfx_t)$, which can hence equivalently be obtained via the forward marginal $\bfx_t \sim p(\bfx_t)$, and solve \eqref{eq:rev_sde} for $\bfx_0^* \sim p(\bfx_0)$. As we consider virtual observables $\hat{\bfr}=\boldsymbol{0}$ introduced via $q_{\mathcal{R}}(\hat{\bfr}\vert\bfx_0^*) =\calN(\hat{\bfr};\boldsymbol{\mathcal{R}}(\bfx_0^*),\sigma^2\bfI)$ we have
\be
    -\log q_{\mathcal{R}}(\hat{\bfr}\vert\bfx_0^*)= \frac{1}{2} \|\boldsymbol{\mathcal{R}}(\bfx_0^*)/\sigma^2\|^2 + C,
    \label{eq:residual_loss}
\ee
where $C$ is a constant that does not depend on $\bfx_0^*$ (and hence $\bfs$). Since $\bfx_0^* \sim p(\bfx_0)$ satisfies $\boldsymbol{\mathcal{R}}(\bfx^*_0)=\boldsymbol{0}$ by assumption, the optimal score model is also a minimizer of the (negative) virtual log-likelihood \eqref{eq:residual_loss} and $\bfs_{\operatorname{opt}}$ recovers the optimal score. Hence, solving \eqref{eq:rev_sde} from $\bfx_T\sim p(\bfx_T)$ for $\bfx_0$ with $\bfs_{\operatorname{opt}}$ as the score generates samples from the true distribution $\bfx_0\sim p(\bfx_0)$.
\end{proof}

\begin{remark}
In practice, \eqref{eq:rev_sde} has to be discretized, since no closed-form solution is available. Hence, even for a perfect score model, this numerical approximation will introduce a bias \citep{Zhang2022}. Also, note that we are free to choose DDPM \citep{Ho2020} or DDIM \citep{Song2020} to discretize \eqref{eq:rev_sde} since the optimal score is unaffected by this choice.
\end{remark}

Lastly, we emphasize that minimizing the residual error on some intermediate latent variables introduces inconsistencies. Assume some general forward conditional marginal of the form $p(\bfx_t \vert \bfx_0) = \mathcal{N}(\bfx_t;\alpha_t\bfx_0, \sigma_t^2 \bfI)$, from which we can sample via $\bfx_t=\alpha_t\bfx_0 + \sigma_t^2\boldsymbol{\epsilon}$, where $\boldsymbol{\epsilon}\sim\mathcal{N}(\boldsymbol{0}, \bfI)$. But in general, $\boldsymbol{\mathcal{R}}(\alpha_t\bfx_0 + \sigma_t^2\boldsymbol{\epsilon})\neq\boldsymbol{0}$; thus enforcing $\boldsymbol{\mathcal{R}}(\bfx_t)=\boldsymbol{0}$ does not align with the underlying marginal distribution.

\subsection{Mean estimation for score-based models}
\label{app:score_model_mean}

As summarized in Appendix~\ref{app:consistency}, score-based models are an alternative perspective on diffusion models that may be understood as their continuous-time generalization \citep{Song2021}. Here, the model learns the data score $\bfs_{\theta}(\bfx_t,t) \approx \nabla \log p_t(\bfx)$, with which we can solve the reverse SDE \eqref{eq:rev_sde} to generate new samples. For a forward conditional marginal of form $p(\bfx_t \vert \bfx_0) = \mathcal{N}(\bfx_t;\alpha_t\bfx_0, \sigma_t^2 \bfI)$, we can obtain $\hat{\bfx}_0=\mathbb{E}[\bfx_0 \vert \bfx_t]$ as (see e.g., \citet{Kingma2021})
\be
    \hat{\bfx}_0 = \frac{1}{\alpha_t}\left(\bfx_t+\sigma_t^2 \bfs_{\theta}(\bfx_t,t)\right),
\ee
based on which the virtual residual likelihood can be estimated.

\subsection{Details on the simplified training objective}
\label{app:simplified_training}

As shown in Section~\ref{sec:pidm_background}, we arrive at an optimization objective \eqref{eq:data_physics_likelihood_obj} that requires sampling over latents from the learned distribution $\bfx_{1:T}\sim p_{\theta}(\bfx_{1:T})$. We simplify this by instead sampling from the available $q(\bfx_{1:T})$, which can be understood as ignoring the likelihood ratio, as we aim to minimize
\begin{align*}
    \mathbb{E}&_{\bfx_0\sim q(\bfx_0)} \left[-\log p_\theta(\bfx_0)\right] + \mathbb{E}_{\bfx_{1:T}\sim p_{\theta}(\bfx_{1:T})} \left[-\log q_{\mathcal{R}}(\hat{\bfr}=\boldsymbol{0}\vert\bfx_0^*(\bfx_{1:T}))\right] \\
    & \leq \mathbb{E}_{q(\bfx_{0:T})}\left[\log\frac{q(\bfx_{1:T}\vert\bfx_0)}{p_{\theta}(\bfx_{0:T})}\right] + \mathbb{E}_{\bfx_{1:T}\sim p_{\theta}(\bfx_{1:T})} \left[-\log q_{\mathcal{R}}(\hat{\bfr}=\boldsymbol{0}\vert\bfx_0^*(\bfx_{1:T}))\right]\\
    & = \mathbb{E}_{q(\bfx_{0:T})}\left[\log\frac{q(\bfx_{1:T}\vert\bfx_0)}{p_{\theta}(\bfx_{0:T})}\right] + \mathbb{E}_{\bfx_{1:T}\sim q(\bfx_{1:T})} \left[-\frac{p_{\theta}(\bfx_{1:T})}{q(\bfx_{1:T})}\log q_{\mathcal{R}}(\hat{\bfr}=\boldsymbol{0}\vert\bfx_0^*(\bfx_{1:T}))\right]\\
    & \approx \mathbb{E}_{q(\bfx_{0:T})}\left[\log\frac{q(\bfx_{1:T}\vert\bfx_0)}{p_{\theta}(\bfx_{0:T})}\right] + \mathbb{E}_{\bfx_{1:T}\sim q(\bfx_{1:T})}\left[-\log q_{\mathcal{R}}(\hat{\bfr}=\boldsymbol{0}\vert\bfx_0^*(\bfx_{1:T}))\right]\\
    & = \mathbb{E}_{q(\bfx_{0:T})}\left[\log\frac{q(\bfx_{1:T}\vert\bfx_0)}{p_{\theta}(\bfx_{0:T})} - \log q_{\mathcal{R}}(\hat{\bfr}=\boldsymbol{0}\vert\bfx_0^*(\bfx_{1:T}))\right].\label{eq:aprox_obj}
\end{align*}
We justify this simplification by the assumption that optimizing the variational bound will bring $p_{\theta}(\bfx_{1:T})$ close to $q(\bfx_{1:T})$ and thus reduce this bias with ongoing model training. Evaluating the final expression gives us the presented physics-informed diffusion model loss \eqref{eq:data_residual_loss}.

\subsection{DDIM sampling}
\label{app:ddim_sampling}

For completeness, we here provide the deterministic DDIM sampling scheme derived by \citet{Song2020} in terms of $\hat{\bfx}_0$:
\be
    \boldsymbol{x}_{\tau_{i-1}}=\sqrt{\bar{\alpha}_{\tau_{i-1}}}\hat{\bfx}_0(\bfx_{\tau_i},t)+\sqrt{\frac{1-\bar{\alpha}_{\tau_{i-1}}}{1-\bar{\alpha}_{\tau_{i}}}} \cdot \left( \bfx_{\tau_i}-\sqrt{\bar{\alpha}_{\tau_i}}\hat{\bfx}_0(\bfx_{\tau_i},t) \right).
    \label{eq:ddim_sampling}
\ee
Generally, $\tau$ is a sub-sequence of the full denoising sequence $[1, \ldots, T]$. During training, $\bfx_t$ is available and we aim to estimate $\bfx_0$ in a given number of reduced timesteps, considering a sequence $\tau=[1, \ldots, t]$. As we empirically found no notable improvement by providing multiple intermediate timesteps, but this comes with the cost of the additional forward passes, we set $\tau=[1,t]$. Hence, we sample $\bfx_1$ from $\bfx_t$ in one step via \eqref{eq:ddim_sampling} and then obtain $\bfx_0$ by $\hat{\bfx}_{0}(\bfx_1,t=1)$, resulting in two forward passes of the model. Note that we reduce the impact of worse estimates at noisier timesteps by the increased variance of the virtual residual likelihood, penalizing deviations from $\boldsymbol{\mathcal{R}}(\bfx_0)=\boldsymbol{0}$ less.

\subsection{Other constraint types and auxiliary optimization objectives}
\label{app:other_constraints}
\subsubsection{Inequality constraints}
\label{app:inequality_constraints}

Inequality constraints are also common in physics (e.g., the second law of thermodynamics). Consideration of these constraints of type
\be
h(\bfx_0)\leq h_{\text{max}}
\ee
can be introduced via $\mathcal{R}_{\text{ineq}}=\text{ReLU}(h(\bfx_0)-h_{\text{max}})=\max(0,h(\bfx_0)-h_{\text{max}})$, which we analogously introduce into the variational loss as the Gaussian $q_{\mathcal{R_{\text{ineq}}}}(\hat{r}_{\text{ineq}} \vert \bfx_0^*(\bfx_t,t))= \calN(\hat{r}_{\text{ineq}};\mathcal{R}_{\text{ineq}}(\bfx_0^*), \Sigma_t/c)$ with $\hat{r}_{\text{ineq}} = 0$.

\subsubsection{Optimization objectives}
\label{app:opt_objectives}

Optimization objectives of the form 
\be
\min \mathcal{J}(\bfx_0)
\ee
can also be considered by treating the optimum as a pseudo-observable $\hat{j}_{\text{opt}}$ and a similar strategy as before. In general, however, this optimum is---unlike $\hat{\bfr}$ and $\hat{r}_{\text{ineq}}$---typically unknown. As a remedy, we may extend the mismatch of the actual and optimal objective by a pseudo-observable $\hat{r}_{\text{opt}}=0$ and pose this as a sample from the exponential distribution, as shown by \citet{Rixner2021}:
\be
    \mathcal{J}(\bfx_0)-\hat{j}_{\text{opt}} + \hat{r}_{\text{opt}} \sim \operatorname{Expon}\left(\lambda\right),
\ee
where
\be
    \operatorname{Expon}\left(x;\lambda\right) = \begin{cases}\lambda e^{-\lambda x} & x \geq 0 \\ 0 & x<0\end{cases}.
\ee
By similar reasoning as before, we introduce
\be
\begin{aligned}
    &q_{\mathcal{J}}(\hat{r}_{\text{opt}}\vert\bfx_0^*(\bfx_{1:T})) = \prod_{t=1}^{T} q_{\mathcal{J}}(\hat{r}_{\text{opt}}\vert\bfx_0^*(\bfx_{t},t)),\\
    &\text{where } \ q_{\mathcal{J}}(\hat{r}_{\text{opt}}\vert\bfx_0^*(\bfx_{t},t)) = \lambda e^{-\lambda\left(\mathcal{J}(\bfx_0^*)-\hat{j}_{\text{opt}}\right)},
    \label{eq:opt_dist}
\end{aligned}
\ee
and observe that the log-likelihood
\be
    \log q_{\mathcal{J}}(\hat{r}_{\text{opt}}\vert\bfx_0^*(\bfx_{1:T})) = \sum_{t=1}^T \left[\log(\lambda) - \lambda \mathcal{J}(\bfx_0^*(\bfx_t,t)) + \lambda \hat{j}_{\text{opt}}\right]
\ee
decouples $\hat{j}_{\text{opt}}$ from $\bfx_0^*$ due to the properties of the exponential distribution. Therefore, knowledge of $\hat{j}_{\text{opt}}$ is not required for training. The loss function is hence extended to
\be
    L_{\text{PIDM-opt}}(\theta)=\mathbb{E}_{t, \bfx_{0:T}}\left[\lambda_t\|\bfx_0-\hat{\bfx}_0(\bfx_t, t)\|^2 + \frac{1}{2\bar{\Sigma}_t} \|\boldsymbol{\mathcal{R}}(\bfx_0^*(\bfx_t, t))\|^2 + \lambda \mathcal{J}(\bfx_0^*(\bfx_t,t))\right].
    \label{eq:data_residual_optimizer_loss}
\ee
Note that we could also couple the parameter $\lambda$ of the exponential distribution to the denoising timestep (similar to $\bar{\Sigma}_t$), but choose to keep it constant here. As usual in multi-objective optimization, trade-offs between the different loss contributions are expected, depending on the relative weighting \citep{Wang2021}.

\subsection{Numerical studies}
\label{app:numerical_studies}

For the presented evaluations, we introduce the mean absolute residual error as follows
\be
    \mathcal{R}_{\text{MAE}}\left(\bfx_0\right)=\frac{1}{M+N}\left(\sum_{i=1}^M \left|\mathcal{F}_i\left[\bfx_0\right]\right|+\sum_{j=1}^N \left|\mathcal{B}_j\left[\bfx_0\right]\right|\right),
    \label{eq:residual_mae}
\ee
where $i$ and $j$ denote the $i$-th PDE and $j$-th boundary condition constraint, and we consider a total of $M$ and $N$ such constraints, respectively.

\subsubsection{Toy problem}
\label{app:toy_problem}

\paragraph{Setup.} In a simple, instructive example, we demonstrate the implications of the proposed physics-informed loss function \eqref{eq:data_residual_loss} both considering the mean and sample estimation for the residual evaluation. The objective is to learn a distribution $q(\bfx_0)$ that samples points uniformly on the unit circle $\mathcal{S}_1=\{\boldsymbol{\xi} \in \mathbb{R}^2: \|\boldsymbol{\xi}\| = 1\}$, so that samples should obey a simple algebraic equality constraint. \update{Thus, generated samples of the model $\bfx_0$ directly correspond to the two spatial coordinates $(\xi_1,\xi_2)$.} We here identified $c=0.1$ and $c=0.005$ via a simple hyperparameter sweep for the mean and sample estimation, respectively, and thus increased the variance of the implied virtual likelihood.

We choose a simple 3-layer MLP of latent dimension 128 with a 2D vector $(\xi_1, \xi_2)$ as in- and output. Information about the diffusion timestep $t$ is added by transforming $t$ to an embedding which is element-wise multiplied by the output of the linear layer, generally followed by the Softplus activation except for the last layer. We train the model for 400 epochs on 10{,}000 randomly sampled points of the unit circle, using the Adam optimizer \citep{Kingma2015} with a learning rate of $5\times10^{-4}$. We use a batch size of $128$, and 100 diffusion timesteps with a cosine scheduler \citep{Dhariwal2021}. The training time of each considered model variant takes around 12 minutes on an Nvidia Quadro RTX 6000 GPU (equipped with 24GB GDDR6 memory).

\paragraph{Results.} To understand the impact of the physics-informed loss, we train the proposed PIDM either via mean or sample estimation in four different scenarios: (i) via the classical setup of the variational bound on the data likelihood \eqref{eq:original_loss}, (ii) via the proposed PIDM loss \eqref{eq:data_residual_loss}, (iii) by only considering the residual loss (and neglecting the variational bound), and (iv) by again considering the PIDM loss \eqref{eq:data_residual_loss} but with data sampled from an uninformative prior $\bfx_0\sim\mathcal{N}(\boldsymbol{0},\bfI)$. We present the average residual error of 100 samples generated over 400 training epochs and plot 100 generated samples after training in Figure~\ref{fig:toy_problem_mean} and Figure~\ref{fig:toy_problem_sample} for the mean and sample estimation, respectively. \update{More specifically, we provide the mean absolute residual error evaluated as $\mathcal{R}_{\text{MAE}}\left(\bfx_0\right)= |\|\boldsymbol{\xi}\|^2-1|$, which is averaged over all samples.}

\begin{figure}[htp]
    \centering
    \begin{subfigure}[c]{.5\textwidth}
        \centering
        \includegraphics[width=\linewidth]{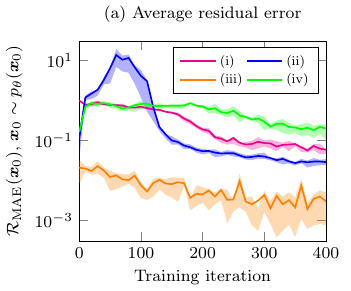}
    \end{subfigure}%
    \begin{subfigure}[c]{.5\textwidth}
        \centering
        \includegraphics[width=\linewidth]{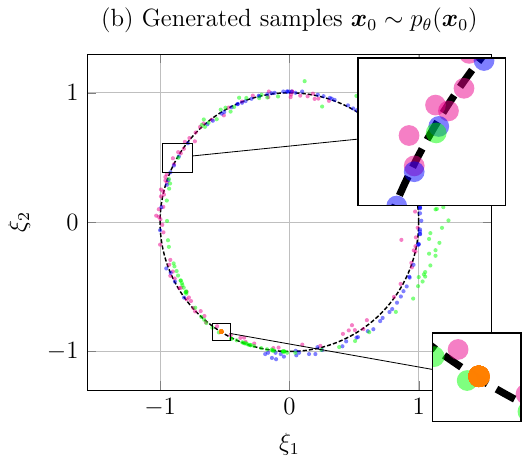}
    \end{subfigure}
    \caption{\textbf{(Mean estimation)} Evaluation of the average residual error of 100 generated samples during training (a, averaged over 10 training runs with 25/75\%-quantiles using different seeds) and 100 generated samples after training (b, from representative models) in four different settings. We consider a diffusion model trained with the standard (data-driven) objective (i), our proposed PIDM (ii), a model trained solely on the residual loss term (iii), and again our proposed PIDM but with data sampled from an uninformative Gaussian prior (iv). The residual during training is evaluated via mean estimation, i.e., $\bfx_0^*=\mathbb{E}[\bfx_0\vert\bfx_t]$. In (b), we indicate the unit circle to which all samples should be constrained. Colors in (b) match those in (a).}
    \label{fig:toy_problem_mean}
\end{figure}

\begin{figure}[htp]
    \centering
    \begin{subfigure}[c]{.5\textwidth}
        \centering
        \includegraphics[width=\linewidth]{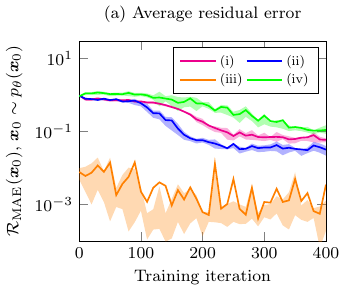}
    \end{subfigure}%
    \begin{subfigure}[c]{.5\textwidth}
        \centering
        \includegraphics[width=\linewidth]{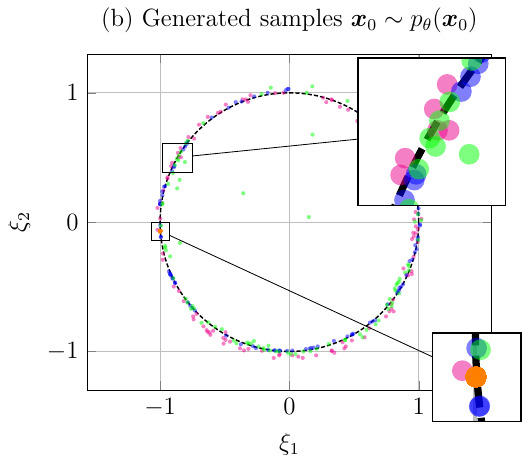}
    \end{subfigure}
    \caption{\textbf{(Sample estimation)} The setting is identical to the one provided in Figure~\ref{fig:toy_problem_mean}, but we here estimate $\bfx_0^*$ via DDIM \citep{Song2020}.}
    \label{fig:toy_problem_sample}
\end{figure}

\begin{figure}[ht]
    \begin{center}
        \begin{tikzpicture}[spy using outlines={rectangle, magnification=5, connect spies}]
        \begin{groupplot}[
            group style={group size=3 by 3, horizontal sep=1.5cm, vertical sep=2cm},
            width=4.5cm,
            height=4.5cm,
            axis equal,
            xlabel={$\xi_1$},
            ylabel={$\xi_2$},
            xmin=-1.3, xmax=1.3,
            ymin=-1.3, ymax=1.3,
            xtick={-1,0,1},
            ytick={-1,0,1},
            grid=major
        ]        
        \nextgroupplot[title={$c=0$ ($\mathcal{R}_{\text{MAE}}=0.080$)}]
        \addplot[only marks, mark=*, mark options={fill=magenta, opacity=0.5, mark size=1pt, draw opacity = 0}] 
            table[col sep=comma, x index=0, y index=1] {figures/toy/data/PIDM_SE_weight_study/c0.csv};
        \addplot[data cs=polar, line width=0.7pt, dash pattern=on 2pt off 1pt, samples=400, domain=0:360] (x, 1);        
        \nextgroupplot[title={$c=0.005$ ($\mathcal{R}_{\text{MAE}}=0.061$)}]
        \addplot[only marks, mark=*, mark options={fill=magenta, opacity=0.5, mark size=1pt, draw opacity = 0}] 
            table[col sep=comma, x index=0, y index=1] {figures/toy/data/PIDM_SE_weight_study/c0005.csv};
        \addplot[data cs=polar, line width=0.7pt, dash pattern=on 2pt off 1pt, samples=400, domain=0:360] (x, 1);        
        \nextgroupplot[title={$c=0.02$ ($\mathcal{R}_{\text{MAE}}=0.048$)}]
        \addplot[only marks, mark=*, mark options={fill=magenta, opacity=0.5, mark size=1pt, draw opacity = 0}] 
            table[col sep=comma, x index=0, y index=1]{figures/toy/data/PIDM_SE_weight_study/c002.csv};
        \addplot[data cs=polar, line width=0.7pt, dash pattern=on 2pt off 1pt, samples=400, domain=0:360] (x, 1);        
        \nextgroupplot[title={$c=0.04$ ($\mathcal{R}_{\text{MAE}}=0.013$)}]
        \addplot[only marks, mark=*, mark options={fill=magenta, opacity=0.5, mark size=1pt, draw opacity = 0}] 
            table[col sep=comma, x index=0, y index=1] {figures/toy/data/PIDM_SE_weight_study/c004.csv};
        \addplot[data cs=polar, line width=0.7pt, dash pattern=on 2pt off 1pt, samples=400, domain=0:360] (x, 1);
        \nextgroupplot[title={$c=1$ ($\mathcal{R}_{\text{MAE}}=0.0037$)}]
        \addplot[only marks, mark=*, mark options={fill=magenta, opacity=0.5, mark size=1pt, draw opacity = 0}] 
            table[col sep=comma, x index=0, y index=1] {figures/toy/data/PIDM_SE_weight_study/c1.csv};
        \addplot[data cs=polar, line width=0.7pt, dash pattern=on 2pt off 1pt, samples=400, domain=0:360] (x, 1);    
        \nextgroupplot[title={$c=5$ ($\mathcal{R}_{\text{MAE}}=0.0024$)}]
        \addplot[only marks, mark=*, mark options={fill=magenta, opacity=0.5, mark size=1pt, draw opacity = 0}] 
            table[col sep=comma, x index=0, y index=1] {figures/toy/data/PIDM_SE_weight_study/c5.csv};
        \addplot[data cs=polar, line width=0.7pt, dash pattern=on 2pt off 1pt, samples=400, domain=0:360] (x, 1);    
        \end{groupplot}
        \end{tikzpicture}    
    \end{center}
    \caption{Evaluation of 400 generated samples after training our proposed PIDM with sample estimation in the same setting as Figure~\ref{fig:toy_problem_sample} but with varying scale factors $c$. We also indicate the mean absolute residual error $\mathcal{R}_{\text{MAE}}$, averaged over all samples and the unit circle to which all samples should be constrained.}
    \label{fig:parameter_study}
\end{figure}
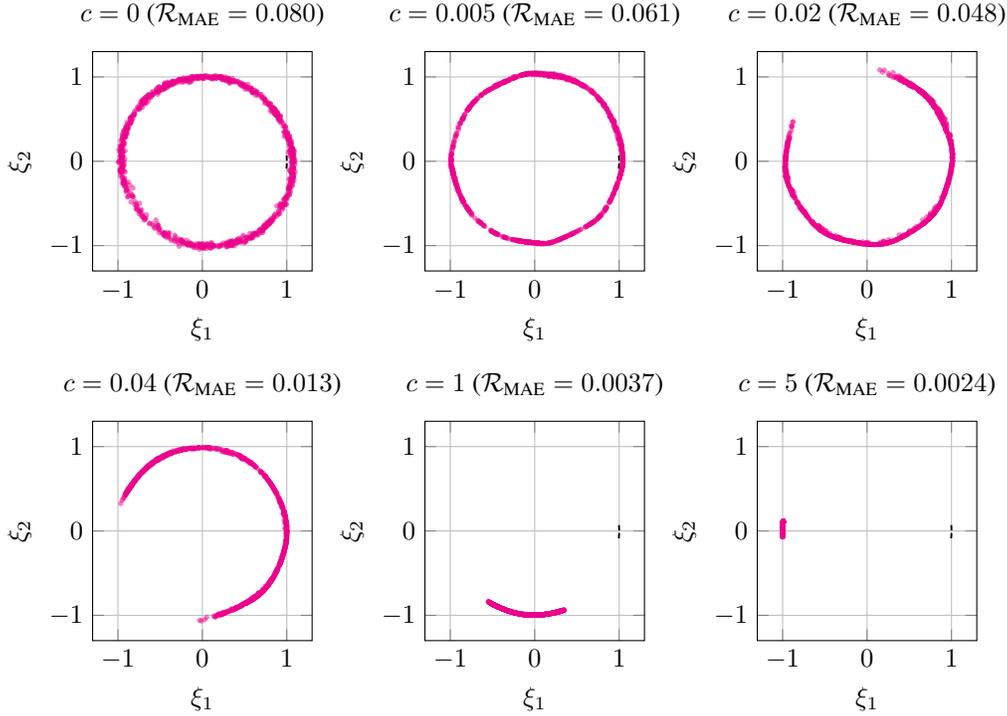

We observe that the studies for the mean and sample estimation present a similar picture, and we may summarize the findings independently of the considered estimation mechanism as follows. As indicated in Figure~\ref{fig:toy_problem_mean}a~and~\ref{fig:toy_problem_sample}a, we observe that the proposed physics-informed loss (ii) indeed outperforms the standard setup (i) in terms of the residual error after approximately 100-120 training epochs. This is also visually confirmed in Figure~\ref{fig:toy_problem_mean}b, where the samples from the ``constraint-informed'' model align more accurately on the unit circle (see also the top inset). When training the model solely on the constraints (iii), the model efficiently reduces the residual. However, it converges to a single point randomly located somewhere on $\mathcal{S}_1$, as shown in the bottom inset. This is indeed expected: the model finds no penalty for collapsing to any points on $\mathcal{S}_1$, and similar results are generally observable if the residual penalty vastly exceeds the data loss term. Interestingly, the model can also approximate the target distribution even in the absence of training data using only an uninformative prior (iv), though here at a reduced accuracy. We do not explore this idea further, but this suggests the use of an uninformative prior as a regularization that prompts the model to explore a wider range of solutions within the constraint space.
Although the simple setting of this study prevents straightforward analogies to more complex setups, it confirms the validity of our proposed framework and its ability to enforce constraints.

\paragraph{Study on the influence of relative weighting.} To systematically examine the effect of the scale factor $c$ that dictates the importance the model assigns to the residual loss, we provide results for six different values in Figure~\ref{fig:parameter_study}.  The experimental setup remains consistent with the previous configuration for the sample estimation, except for the variation in $c$. As $c\to0$, we recover the standard (data-driven) diffusion model. For increasing $c$, the residual error is increasingly reduced, but the model eventually generates samples increasingly concentrated at a point randomly located on the constraint manifold.

As $c\gg1$, the distribution fully collapses as the model ignores the data loss, and the behavior resembles that of a model trained exclusively on the residual loss (similar to scenario (iii) in Figure~\ref{fig:toy_problem_sample}). For moderate values, such as $c=0.005$, the distribution achieves an optimal balance: maintaining full diversity while effectively minimizing the residual error.

\subsubsection{Details of the Darcy flow study}
\label{app:darcy_flow_details}

\paragraph{Background.}  The Darcy flow equations describe the steady-state solution of fluid flow through a porous medium. Given a permeability field $K(\boldsymbol{\xi})$, the pressure distribution $p(\boldsymbol{\xi})$ and velocity field $\bfu(\boldsymbol{\xi})$ are governed by
\be
\begin{aligned}
\bfu(\boldsymbol{\xi}) & =-K(\boldsymbol{\xi}) \nabla p(\boldsymbol{\xi}), \quad \boldsymbol{\xi} \in \Omega \\
\nabla \cdot \bfu(\boldsymbol{\xi}) & =f(\boldsymbol{\xi}), \quad \boldsymbol{\xi} \in \Omega \\
\bfu(\boldsymbol{\xi}) \cdot \hat{\bfn}(\boldsymbol{\xi}) & =0, \quad \boldsymbol{\xi} \in \partial \Omega \\
\int_{\Omega} p(\boldsymbol{\xi}) \dd \boldsymbol{\xi} & =0,
\label{eq:darcy_pdes}
\end{aligned}
\ee
where $\hat{\bfn}$ denotes the outward unit vector normal to the boundary. Similar to contemporary work \citep{Zhu2018,Jacobsen2023}, we consider a 2D square domain and set the source function to
\be
    f(\boldsymbol{\xi})= \begin{cases}r, & \text { if }\left|\xi_i-\frac{1}{2} w\right| \leq \frac{1}{2} w, \text { for } i=1,2 \\ -r, & \text { if }\left|\xi_i-1+\frac{1}{2} w\right| \leq \frac{1}{2} w, \text { for } i=1,2 \\ 0, & \text { otherwise,}\end{cases}
\ee
with $r=10$ and $w=0.125$. We sample $K(\boldsymbol{\xi})$ from a Gaussian random field (GRF), i.e.,
\be
K(\boldsymbol{\xi})=\exp (G(\boldsymbol{\xi})), \quad G(\cdot) \sim \mathcal{N}(0, k(\cdot, \cdot))
\label{eq:K_samples}
\ee
with covariance
\be
    k\left(\boldsymbol{\xi}, \boldsymbol{\xi}^{\prime}\right)=\exp \left(-\left\|\boldsymbol{\xi}-\boldsymbol{\xi}^{\prime}\right\|_2/l\right), \quad \text{with } \ l=0.1.
    \label{eq:cov_func}
\ee
Instead of directly considering the GRF, we reduce its dimensionality by considering its
Karhunen-Loève expansion up to $s=64$ terms,
\be
    G(\boldsymbol{\xi}) = \sum_{i=1}^s \sqrt{\lambda_i} z_i \phi_i(\boldsymbol{\xi{}}),
\ee
with $\lambda_i$ and $\phi_i(\boldsymbol{\xi})$ as, respectively, the eigenvalues and eigenfunctions of \eqref{eq:cov_func} sorted by decreasing $\lambda_i$, and $z_i \sim \mathcal{N}(0,1)$. Discretization of \eqref{eq:darcy_pdes} is achieved via second-order central finite differences, and we refer to \citet{Jacobsen2023} for details. To incorporate the boundary conditions and the integral constraint, we extend the linear system of type $\bfA \bfp = \bff$ by additionally imposing the above constraints, so that $\bfA \in \mathbb{R}^{(n^2+4n+1) \times n^2}, \bfp \in \mathbb{R}^{n^2}$, and $\bff \in \mathbb{R}^{n^2+4n+1}$ (where $n=64$). We solve for the over-determined pressure field using the \texttt{scipy.linalg.lstsq} \citep{Virtanen2020} solver with default settings.

\paragraph{Model architecture and residual computation.} The considered U-Net \citep{Ronneberger2015} architecture is based on \citet{Wang2022}, given its demonstrated success in learning the denoising process \citep{Dhariwal2021}. Our configuration uses an image input resolution of $64\times64$ pixels, aligning with the grid resolution of the linear system under consideration. Importantly, we require the residual evaluation of the predicted images $\boldsymbol{\mathcal{R}}(\bfx_0^*)$, as required in \eqref{eq:data_residual_loss}, to be consistent with the data, as otherwise the optimal data likelihood is in partial conflict with the (virtual) residual likelihood. This is ensured by using the same finite difference stencils as in the dataset creation, essentially reassembling $\bff$, except that we remove the integral constraint, since it can be trivially fulfilled by shifting the predicted pressure field \citep{Jacobsen2023}. Finite difference stencils are implemented via \texttt{torch.nn.Conv2D} \citep{Paszke2019} with a custom kernel, which we can precompute for stencils up to arbitrary order via \texttt{findiff} \citep{Baer2018}.

We here restrict ourselves to an unconditional model, though extensions to conditional generation \citep{Ho2021} are straightforward. The U-Net has two in- and output channels and is trained to predict the clean signal based on a noisy input, as stated in Section~\ref{sec:ddpm_background}. Thus, the model is trained to generate pairs $(\bfK,\bfx)$ where $\bfK$ is sampled similar to \eqref{eq:K_samples}, and $\bfp$ is the corresponding (unique) pressure field that satisfies the Darcy flow equations \eqref{eq:darcy_pdes}. The residual is then assembled by considering $\boldsymbol{\mathcal{F}}[\bfx_0] := \nabla \cdot(\bfK \nabla \bfp) + \bff= \boldsymbol{0}$. Further details of the model architecture and training hyperparameters are given in Appendix~\ref{app:hyperparams_pdes} and the code.

\paragraph{Comparison with other frameworks.} Note that our architecture of the physics-guided diffusion model (ii) is not an exact replication of the one given by \citet{Shu2023} due to the different setting, but the proposed conditioning mechanism is closely mimicked, as detailed in the code. \citet{Jacobsen2023} presented a strategy to iteratively ``correct'' the latent variables $\bfx_t$ and samples $\bfx_0$ during inference by applying gradient descent based on the PDE residuals (iii). We first followed the optimal setting proposed in \citet{Jacobsen2023} and applied gradient-based descent $\nabla_{\bfx_t}\|\boldsymbol{\mathcal{R}}(\bfx_{t})\|_2^2$ with $\epsilon=2 \times 10^{-4} / \max \nabla_{\bfx_t} \boldsymbol{\mathcal{R}}(\bfx_{t})$ for the last $N$ steps of the sampling iterations and $M$ additional iterations. We set $M=25$ and $N=50$ according to the best-reported results (equal to starting corrections halfway through the sampling). However, we encountered stability issues for the above value of $\epsilon$, likely due to differences in the sampling scheme and fewer considered timesteps compared to \citet{Jacobsen2023}, who adopted a score-based perspective. We therefore conducted a parameter sweep to identify the converging results with the best performance, which we identified at $\epsilon=1 \times 10^{-6} / \max \nabla_{\bfx_t} \boldsymbol{\mathcal{R}}(\bfx_{t})$. While additional sweeps might yield some further incremental improvement, we observed that different values of $\epsilon$ yielded similar results, as long as the updates converged.

\subsubsection{Details of the topology optimization study}
\label{app:top_opt_details}

\paragraph{Background.} Topology optimization aims to identify a structure with optimal mechanical properties, typically optimal mechanical stiffness. This can be formalized as the minimization of the mechanical compliance $\text{C}$ under a set of equality constraints (given by mechanical equilibrium and boundary conditions) and inequality constraints (typically a volume constraint). Under the assumptions of linear elasticity, the problem reads
\be
\begin{aligned}
\min_{\rho(\boldsymbol{\xi})} & \underbrace{\int_{\Omega} \frac{1}{2} \boldsymbol{\sigma}(\boldsymbol{\xi}) : \boldsymbol{\varepsilon}(\boldsymbol{\xi}) \dd \Omega}_{\text{C}}, \quad \boldsymbol{\xi} \in \Omega \\
\text{subject to:} \quad & \rho(\boldsymbol{\xi}) \in [0,1], \\
& \int_{\Omega} \rho(\boldsymbol{\xi}) \, \dd \Omega \leq V_{\text{max}}, \\
& \nabla \cdot \boldsymbol{\sigma}(\boldsymbol{\xi}) + \bff(\boldsymbol{\xi}) = \boldsymbol{0}. \\
\label{eq:le_topopt}
\end{aligned}
\ee
The last equation implies quasistatic mechanical equilibrium.
Here, $\boldsymbol{\varepsilon}$ and $\boldsymbol{\sigma}$ denote the strain and stress tensor fields, respectively, $\bff$ is a distributed body force and $V_{\text{max}}$ a given (maximum) volume constraint. We consider a linear elastic material, which couples $\boldsymbol{\varepsilon}$ and $\boldsymbol{\sigma}$ via Hooke's law, $\boldsymbol{\sigma} = \bfC : \boldsymbol{\varepsilon}$, where $\bfC$ is the fourth-order stiffness tensor, and the colon denotes double tensor contraction. For simplicity, we may assume an isotropic material, so $\bfC$ is characterized by two material constants (e.g., Young's modulus $E$ and Poisson's ratio $\nu$). The solution field is given in terms of the displacements $\bfu(\boldsymbol{\xi})$, from which the strain tensor follows as $\boldsymbol{\varepsilon}=\frac{1}{2}[\boldsymbol{\nabla} \bfu+(\boldsymbol{\nabla} \bfu)^{\intercal}]$. Dirichlet boundary conditions are applied as $\bfu = \bar{\bfu}$ on the boundary $\partial \Omega_u\subset\Omega$, and traction boundary conditions $\boldsymbol{\sigma} \cdot \hat{\bfn} = \bft$ on $\partial \Omega_t\subset\Omega$, where $\hat{\bfn}$ denotes the outward unit vector normal to the boundary $\subset\Omega$. 

In practice, \eqref{eq:le_topopt} is usually discretized via finite elements. We here consider a regular finite element mesh, based on a $65\times65$ grid of nodes with ($64\times64$) four-node quadrilateral elements under plane-stress assumptions (with $E=1$, $\nu=0.3$). This turns the mechanical equilibrium equation into a linear system of type $\bfK \bfU = \bfF$, where $\bfK \in \mathbb{R}^{2n^2\times 2n^2}$ is the global stiffness matrix and $\bfU,\bfF\in \mathbb{R}^{2n^2}$ are the global nodal displacement and the external force vectors, respectively, with $n=65$. Dirichlet boundary conditions are imposed by appropriately modifying $\bfK$ and $\bfF$. Optimized topologies can subsequently be obtained via the Solid Isotropic Material with Penalization (SIMP) method (we refer to \citet{Bendsoe2004} for details). SIMP considers continuous densities in \eqref{eq:le_topopt} to allow for gradient-based optimization, defining $\bfC(\boldsymbol{\xi})=\rho(\boldsymbol{\xi})^p\bfC^0$, where $p>1$ promotes binary entries (corresponding to material placement, $\rho=1$, or void, $\rho=0$) and $\bfC^0$ denotes the material properties of the given isotropic base material. As SIMP often requires many costly finite element analyses to iteratively refine the solution and may get stuck in local minima, deep learning frameworks including generative adversarial networks \citep{Nie2021} and diffusion models \citep{Maze2023, Giannone2023} have been explored as alternatives to mitigate some of these challenges.

\paragraph{Model architecture and residual computation.} We consider the same U-Net architecture as in Section~\ref{sec:darcy_flow} except for the following differences. Similar to \citet{Nie2021}, we also provide the von Mises stress and strain energy fields of the unoptimized domain, as well as the boundary conditions (including the loads) and volume fraction to allow for conditioning in addition to the noisy signal of the solution fields (consisting of the two displacement and density fields) to the model. The model aims to reconstruct the clean signal of the two displacement fields $\bfu_{1,2}$ and the optimal density $\boldsymbol{\rho}$. Besides, we increase the latent dimension of the U-Net to 128 to have a similar number of overall parameters compared to previous work \citep{Maze2023, Giannone2023}. Lastly, we apply a sigmoid activation after the density field channel output to ensure $\rho\in[0,1]$. The predicted $\rho(\boldsymbol{\xi})$ then enters the governing equations via $\bfC(\boldsymbol{\xi})=\rho(\boldsymbol{\xi})\bfC^0$.

For the residual and compliance computation, using finite differences as in Section~\ref{sec:darcy_flow} to assemble \eqref{eq:le_topopt} introduces inconsistencies with the FEM solution (i.e., training data with non-zero residuals), likely due to the sharp transitions in the density field and point-wise introduction of loads. This is problematic, as the minimization of the residual then does not fully correspond to samples from the data distribution, leading to an optimization conflict. We hence treat the pixels as direct representations of the FE grid and apply the stiffness matrix $\bfK$ to evaluate the residual. As we consider a U-Net with in- and output pixel dimensions of $64\times64$, we apply bilinear interpolation to down- or upscale all nodal quantities to a $65\times65$ grid where necessary. Note that we can obtain the global stiffness matrix efficiently by precomputing the local stiffness matrix (up to a factor depending on the corresponding density $\rho(\boldsymbol{\xi})$ at the element level) given, e.g., by \texttt{SolidsPy} \citep{Gomez2018} and vectorizing the global assembly.  Further details of the model architecture and training hyperparameters are given in Appendix~\ref{app:hyperparams_pdes} and the code.

\paragraph{Evaluation.} We present metrics consistent with \citet{Maze2023, Giannone2023}, namely, the compliance error is introduced as $\text{CE}=(\text{C}(\bfx_0)-\text{C}(\bfx_{0,\text{SIMP}})) / \text{C}(\bfx_{0,\text{SIMP}})$ and the volume fraction error as $\text{VFE}=\left|\bar{\rho}(\bfx_0)-\rho_{\text{target}}\right| / \rho_{\text{target}}$, where $\bar{\rho}$ is the (binarized) density averaged over the domain. All models use 100 denoising steps to generate a sample. We note that the models provided by \citet{Maze2023, Giannone2023} are trained for only 200k iterations but with a 16 times larger batch size (64). Besides, these frameworks require additional overhead due to the generation of auxiliary data and training of potential surrogate models \citep{Maze2023}. For the comparison with PG-Diffusion \citep{Shu2023} and CoCoGen \citep{Jacobsen2023} we included both the mechanical equilibrium equations and the volume constraint in the residual. For CoCoGen, computing the full Jacobian $\nabla_{\bfx_t}\boldsymbol{\mathcal{R}}(\bfx_t)$ (here, $\boldsymbol{\mathcal{R}}(\bfx_t)$ is vector-valued), which the authors use to scale the residual exceeds the VRAM of our GPU (even with a batch size of 1 and efficient implementation via \texttt{torch.func.jacfwd}). We thus considered a constant scaling factor, conducted an extensive hyperparameter study, and report the best results (in terms of the median residual on the in-distribution test set).

\clearpage
\subsection{U-Net architecture and training details}
\label{app:hyperparams_pdes}

As described in Section~\ref{sec:darcy_flow}~and~\ref{sec:topopt} we consider a U-Net-based architecture \citep{Ronneberger2015}. The main model and training hyperparameters are summarized in Tables~\ref{table:hyperparameters_architecture} and \ref{table:hyperparameters_training}, respectively. The model is implemented and trained using PyTorch \citep{Paszke2019}. Further details can be found in the code.

For the Darcy flow study, training for 300k iterations took approximately 13 hours for the standard diffusion setup, 16 hours for the PIDM with mean estimation, and 22 hours for the PIDM with sample estimation. For the topology optimization study, training took approximately 48 hours for the standard diffusion setup and 54 hours for the PIDM (with sample estimation). All models were trained on a single Nvidia Quadro RTX Quadro RTX 6000 GPU equipped with 24GB GDDR6 memory.

\begin{table}[ht]
\centering
\caption{Denoising diffusion architecture hyperparameters.}
\label{table:hyperparameters_architecture}
\resizebox{\textwidth}{!}{
\begin{tabular}{ll}
\toprule
\textbf{Hyperparameter} & \textbf{Value} \\
\midrule
In-, output channels \textit{(Darcy flow)} & 2, 2 \\
In-, output channels \textit{(Topology optimization)} & 10, 3 \\
\midrule
ResNet blocks per down- and upsampling pass & 2 \\
ResNet block normalization & Group Normalization \citep{Wu2018} \\
ResNet block activation function & SiLU \citep{Elfwing2017} \\
Attention block normalization & Layer Normalization \citep{Ba2016} \\
Feature map resolutions (downsampling pass) & $64\times64 \rightarrow 32\times32 \rightarrow 16\times16 \rightarrow 8\times8$ \\
Latent dimensions (in feature maps, \textit{Darcy flow}) & $32\to64\to128\to256$ \\
Latent dimensions (in feature maps, \textit{Top.\ opt.}) & $128\to256\to512\to1024$ \\
Attention \citep{Vaswani2017,Katharopoulos2020} head dimension & 32 \\
Number of attention heads & 8 \\
\bottomrule
\end{tabular}
}
\end{table}

\begin{table}[ht]
\centering
\caption{Denoising diffusion process and training hyperparameters.}
\label{table:hyperparameters_training}
\resizebox{\textwidth}{!}{
\begin{tabular}{ll}
\toprule
\textbf{Hyperparameter} & \textbf{Value} \\
\midrule
Number of diffusion timesteps & 100 \\
$\beta_t$-scheduler & Cosine schedule \citep{Dhariwal2021} \\
Batch size (\textit{Darcy flow}), mean estimation & 64 \\
Batch size (\textit{Darcy flow}), sample estimation & 16 \\
Batch size  (\textit{Topology optimization}, sample estimation) & 4 \\
Iterations (\textit{Darcy flow}) & 300k \\
Iterations (\textit{Topology optimization}) & 600k \\
Learning rate & $10^{-4}$ \\
Optimization algorithm & Adam \citep{Kingma2015} \\
EMA start (iteration) & 1{,}000 \\  
Exponential Moving Average (EMA) decay & 0.99 \\
Dropout & none \\
\bottomrule
\end{tabular}
}
\end{table}

\clearpage
\subsection{Additional samples}
\subsubsection{Darcy flow}
\label{app:add_darcy_samples}
\begin{figure}[h]
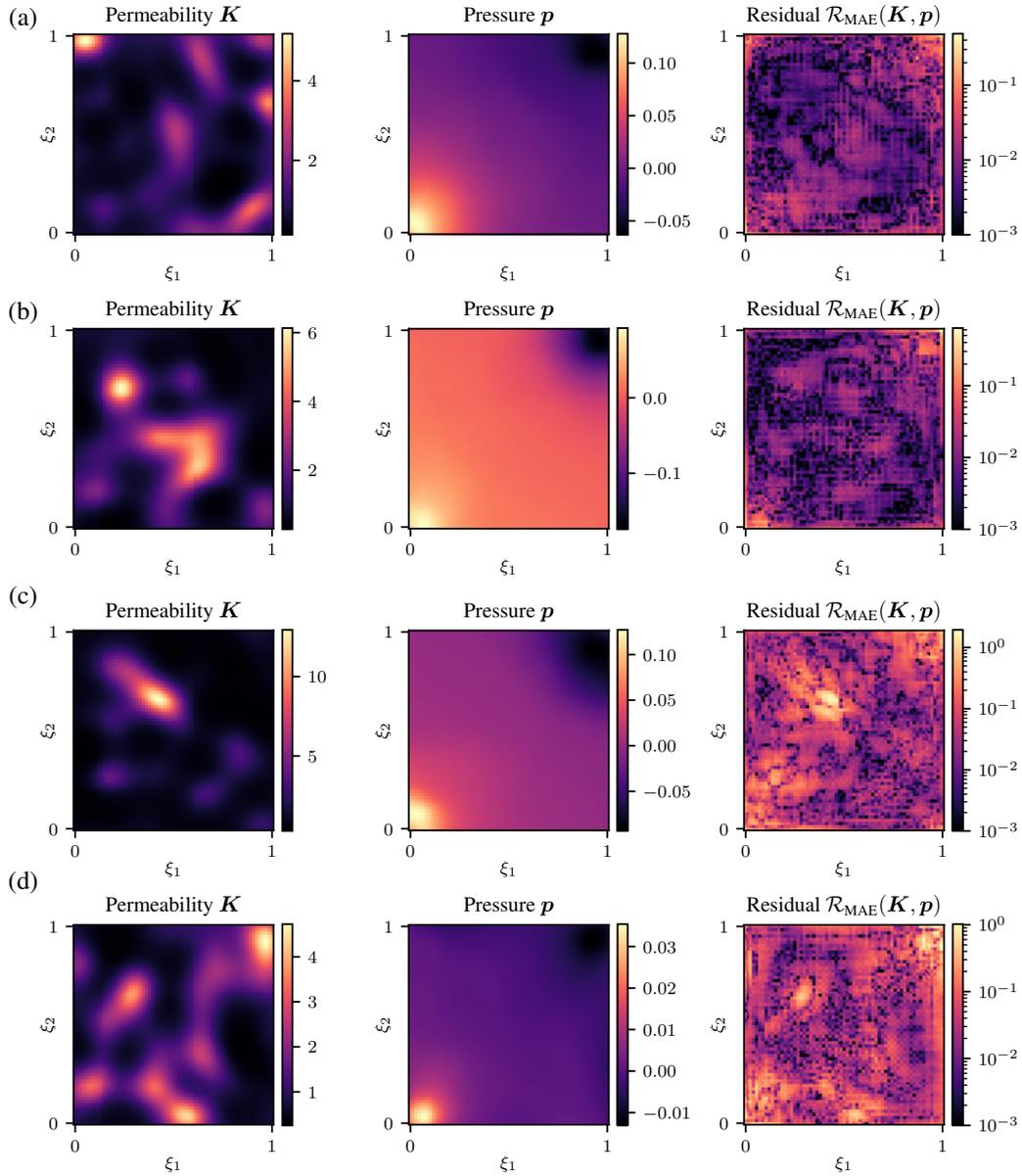

    \begin{center}
        \begin{tikzpicture}
            \node[anchor=south west,inner sep=0] (image) at (0,0){\scalebox{0.85}{\import{figures/darcy/PIDM_ME_sample_1}{fig.pgf}}};
            \node[anchor=south west,inner sep=0] at (0,-4cm){\scalebox{0.85}{\import{figures/darcy/PIDM_ME_sample_2}{fig.pgf}}};
            \node[anchor=south west,inner sep=0] at (0,-8.1cm){\scalebox{0.85}{\import{figures/darcy/PIDM_SE_sample_0}{fig.pgf}}};
            \node[anchor=south west,inner sep=0] at (0,-12.1cm){\scalebox{0.85}{\import{figures/darcy/PIDM_SE_sample_1}{fig.pgf}}};
            \begin{scope}[x={(image.south east)},y={(image.north west)}]
                \node at (0.0,0.925){(a)};
                \node at (0.0,-0.03){(b)};
                \node at (0.0,-1.007){(c)};
                \node at (0.0,-1.96){(d)};
            \end{scope}
        \end{tikzpicture}
    \end{center}
    \caption{Additional samples of permeability and pressure fields as well as the corresponding residual error from the proposed PIDM trained on the Darcy flow dataset. Sample (a) and (b) are sampled from a PIDM with mean estimation, while (c) and (d) are sampled from a PIDM with sample (DDIM) estimation.}
    \label{fig:add_darcy_samples}
\end{figure}

\newpage
\subsubsection{Topology optimization}
\label{app:add_topopt_samples}
\begin{figure}[h]
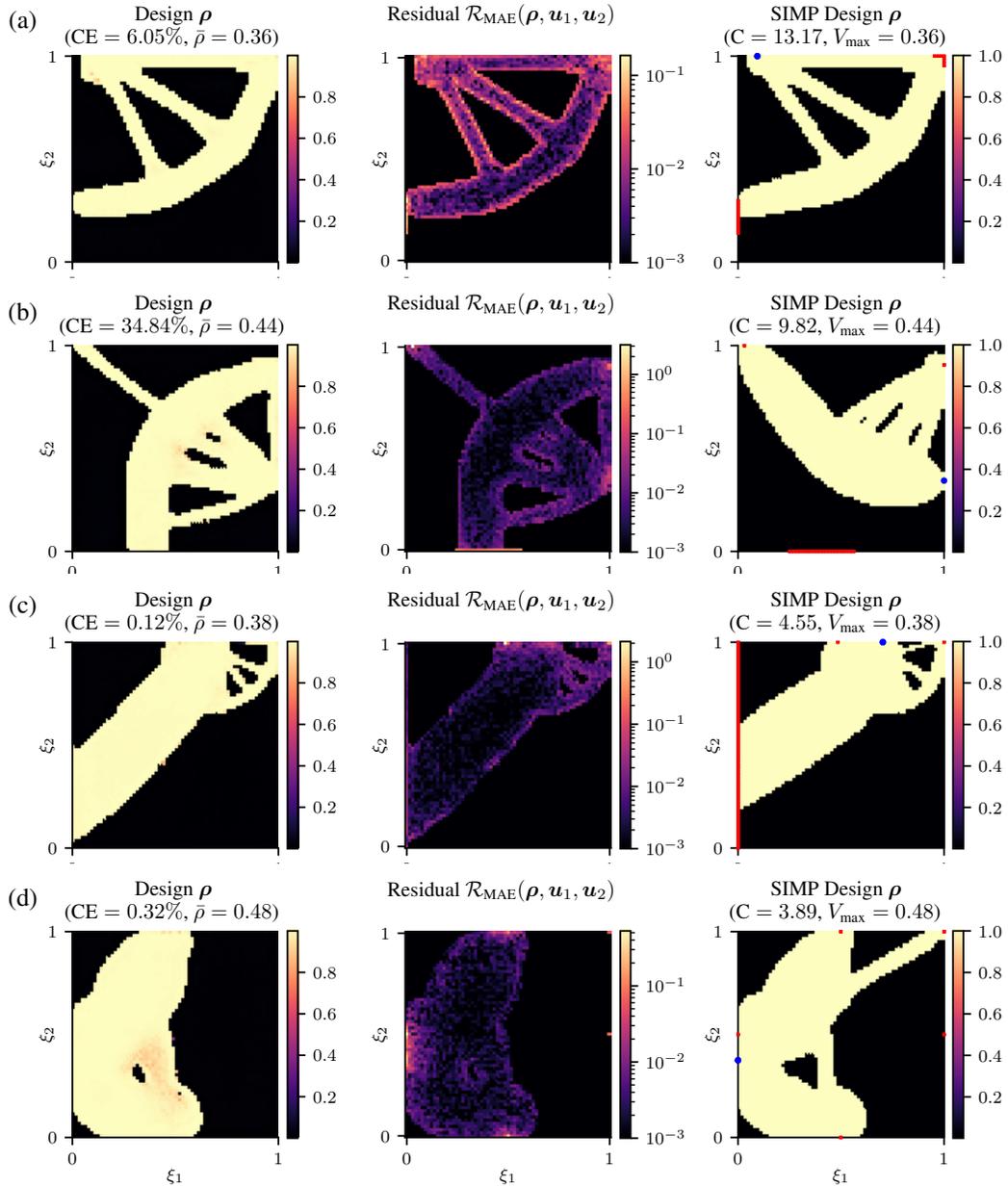

    \begin{center}
        \begin{tikzpicture}
            \node[anchor=south west,inner sep=0] (image) at (0,0){\scalebox{0.85}{\import{figures/mechanics/PIDM_SE_sample_2}{fig.pgf}}};
            \node[anchor=south west,inner sep=0] at (0,-4.5cm){\scalebox{0.85}{\import{figures/mechanics/PIDM_SE_sample_3}{fig.pgf}}};
            \node[anchor=south west,inner sep=0] at (0,-9.05cm){\scalebox{0.85}{\import{figures/mechanics/PIDM_SE_sample_4}{fig.pgf}}};
            \node[anchor=south west,inner sep=0] at (0,-13.6cm){\scalebox{0.85}{\import{figures/mechanics/PIDM_SE_sample_5}{fig.pgf}}};
            \begin{scope}[x={(image.south east)},y={(image.north west)}]
                \node at (0.0,0.93){(a)};
                \node at (0.0,-0.039){(b)};
                \node at (0.0,-1.015){(c)};
                \node at (0.0,-2.0){(d)};
            \end{scope}
        \end{tikzpicture}
    \end{center}
    \caption{Additional generated designs, including the compliance error CE and volume $\bar{\rho}$, and the residual error (based on the displacement fields, not shown) from the PIDM trained on the SIMP dataset and the corresponding SIMP design, including the compliance C and volume $V_{\text{max}}$. All samples are conditioned on the out-of-distribution test set. In the SIMP design, we indicate the applied load by a blue dot, and the Dirichlet boundary conditions in red.}
    \label{fig:add_topopt_samples}
\end{figure}

\end{document}